%% file: iclr2027_conference.tex
\documentclass{article} %
\usepackage{iclr2027_conference,times}

\input{math_commands.tex}

\usepackage[utf8]{inputenc}
\usepackage[T1]{fontenc}
\usepackage{hyperref}
\usepackage{url}
\usepackage{booktabs}
\usepackage{multirow}
\usepackage{amsmath,amssymb,amsfonts,mathtools}
\usepackage{microtype}
\usepackage{xcolor}
\usepackage{graphicx}
\usepackage{tabularx}
\usepackage{algorithm}
\usepackage{algpseudocode}
\usepackage{cleveref}
\crefname{algorithm}{Algorithm}{Algorithms}
\Crefname{algorithm}{Algorithm}{Algorithms}
\usepackage{enumitem}
\usepackage{placeins}
\usepackage{amsthm}
\usepackage{amsmath}
\usepackage{amssymb}
\usepackage{nicefrac}

\usepackage{colortbl}
\definecolor{bestbg}{RGB}{198,230,210}
\definecolor{secondbg}{RGB}{211,225,242}

\definecolor{besttext}{RGB}{45,130,75}
\definecolor{secondtext}{RGB}{60,105,160}

\newtheorem{observation}{Observation}
\newtheorem{proposition}{Proposition}

\usepackage{tikz}
\usetikzlibrary{arrows.meta,positioning,fit,calc}

\newcommand{\I}{\mathbf{I}}
\newcommand{\Zero}{\mathbf{0}}
\newcommand{\m}[1]{\mathbf{#1}}

\newcommand{\diag}{\operatorname{diag}}
\newcommand{\GELU}{\operatorname{GELU}}
\newcommand{\LN}{\operatorname{LN}}
\newcommand{\Drop}{\operatorname{Drop}}
\newcommand{\GL}{\operatorname{GL}}

\newcommand{\dt}{\Delta t}
\newcommand{\FFN}{\operatorname{FFN}}

\definecolor{impgreen}{RGB}{0,132,61}
\newcommand{\impc}[1]{\textcolor{impgreen}{\textbf{#1}}}
\newcommand{\noimp}[1]{\textcolor{black!50}{#1}}

\title{WaveGSSM: Graph Wave State Space Models for Propagating Spatio-Temporal Patterns}

\author{
\textbf{Junyou Zhu}$^{1,2}$ \quad
\textbf{Fenying Cai}$^{1}$ \quad
\textbf{Ping Xiong}$^{2}$ \quad
\textbf{Christian Nauck}$^{1}$ \quad
\textbf{Langzhou He}$^{3}$ \\
\textbf{Chao Gao}$^{4}$ \quad
\textbf{Juergen Kurths}$^{1,5}$ \quad
\textbf{Frank Hellmann}$^{1}$
\\[2mm]
$^{1}$Potsdam Institute for Climate Impact Research, Germany \\
$^{2}$Machine Learning Group, Technical University of Berlin, Germany \\
$^{3}$Department of Computer Science, University of Illinois at Chicago, USA \\
$^{4}$Northwestern Polytechnical University, China \\
$^{5}$Department of Physics, Humboldt University Berlin, Germany
}

\iclrfinalcopy %

\begin{document}

\maketitle
\lhead{Preprint}

\begin{abstract}
Spatio-temporal graph models typically encode each snapshot with a GNN and then connect the resulting representations through a temporal module. This space-then-time design is effective, yet it does not explicitly represent how a pattern moves across the graph. We show empirically that, for a propagating process, the same present field can lead to different futures when its recent rate of change differs, motivating an explicit representation of motion in the predictive state. We introduce WaveGSSM, a second-order graph state-space model that maintains two coupled latent states at each node, one for the current pattern and one for its temporal rate of change. A graph-wave transition updates the motion state through graph interactions and uses it to advance the pattern state, coupling spatial propagation and temporal evolution within a single rollout. We evaluate WaveGSSM on four temporal-graph benchmarks and global weather forecasting. It consistently achieves the best mean performance across the temporal-graph benchmarks and reduces the geopotential RMSE by $20.2\%$ on average for 1- to 5-day weather forecasts relative to a backbone-matched snapshot model, while better preserving large-scale atmospheric patterns.
\end{abstract}

\section{Introduction}
\label{sec_introduction}
Spatio-temporal graph learning commonly combines graph-based spatial encoding with recurrence \citep{cini2023taming}, attention \citep{karmim2024supra}, or state-space models \citep{li2024graphssm} for temporal modeling. Most methods therefore first encode each graph snapshot and then connect the resulting representations across time \citep{wang2025simple}. However, this space-then-time factorization does not explicitly represent how a pattern moves through the graph. For propagating processes, the same spatial distribution of activation can lead to different futures when its rate of change, or velocity, differs. Existing models may infer this velocity from temporal history, but it remains implicit rather than being represented in the graph state. Errors in the inferred velocity can accumulate over recursive forecasts and shift the predicted pattern away from its correct location. This raises a natural question. Can the activation pattern and its velocity be represented jointly, so that propagation through graph edges and evolution through time are advanced within the same transition?

\begin{figure}[h]
\centering
\includegraphics[width=\linewidth]{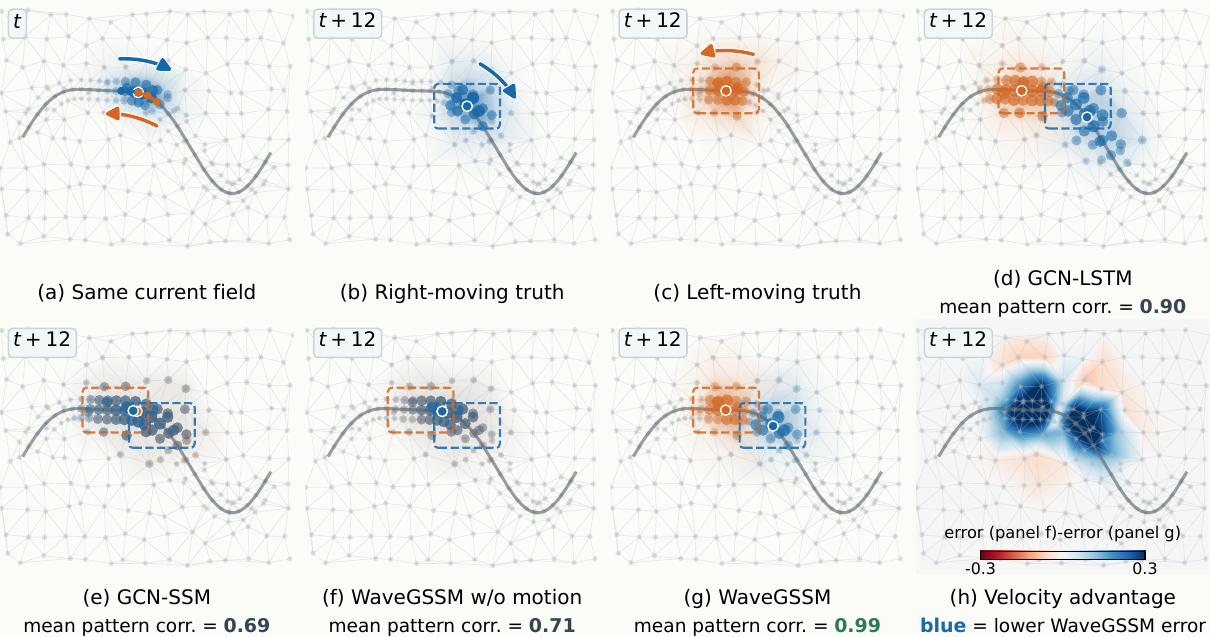}
\caption{A synthetic experiment shows that the present graph field alone does not determine the future of a propagating pattern. Paired trajectories, shown in blue and orange, share the same graph and present field at time $t$ but arrive with opposite recent motion (a), leading to different future locations at $t+12$ (b, c). Panels (d)--(g) compare history-aware baselines, WaveGSSM without the motion state, and the full WaveGSSM. The values below panels (d)--(g) report the mean Pearson pattern correlation over the two motion directions at $t+12$ for the displayed held-out pair. Panel (h) visualizes the error reduction from carrying the motion state. WaveGSSM best preserves the two motion-dependent futures, while aggregate future-separation and centroid-error results are reported in Appendix~\Cref{fig:intro_rollout_aggregate}.}
\label{fig_intro_rollout}
\end{figure}

Answering this question requires a predictive representation that captures both the present pattern and how it is moving. \Cref{fig_intro_rollout} shows why this information matters in a simple synthetic forecasting setting. At time $t$, the two trajectories in panel~(a) have the same graph and identical node patterns, but their recent histories indicate motion in opposite directions. They therefore evolve to different future locations, as shown in panels~(b) and~(c). For each trajectory, all history-aware models receive the same three-frame history ending at time $t$. We ask whether they can use this history to predict the two different futures. We compare two space-then-time baselines, GCN-LSTM and GCN-SSM, against WaveGSSM and a variant without the motion state. GCN-LSTM and GCN-SSM combine snapshot-level graph encoding with a temporal module. They capture part of the motion information, while the full WaveGSSM most accurately predicts both motion-dependent futures. This motivates representing pattern and motion jointly within the evolving graph state rather than leaving motion implicit in temporal memory. We also provide a theoretical analysis of this state requirement in Observation~\ref{thm:velocity_phase}.

The same state-representation problem also appears in real medium-range weather forecasting, where coherent atmospheric patterns move over several days while changing their strength and shape~\citep{kornhuber2020amplified,wang2025evidence}. A forecast can therefore produce a plausible large-scale pattern while placing it at the wrong location. \Cref{fig_intro_atmosphere} illustrates this failure mode in a five-day forecast of 500\,hPa geopotential height, or z500, a standard field for describing large-scale atmospheric circulation. A backbone-matched GraphCast-style predictor~\citep{lam2023graphcast} captures the broad Atlantic-to-Europe pattern but displaces the ridge--trough structure relative to ERA5, while WaveGSSM keeps the predicted structure more closely aligned. Hence, this example further motivates modeling not only the present pattern, but also its motion.

\begin{figure}[th]
\centering
\includegraphics[width=\linewidth]{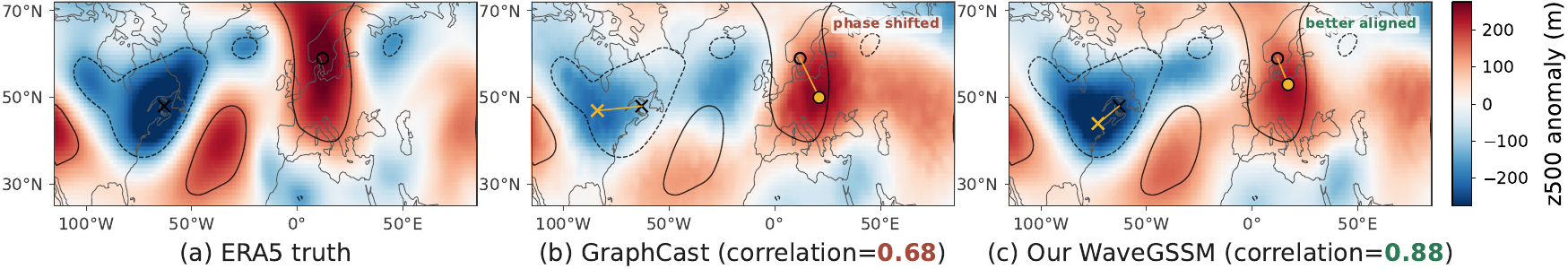}
\caption{A plausible weather forecast can still place a coherent pattern at the wrong location. (a) The heatmap shows the ERA5 reanalysis as the evaluation ground truth of predicting five-day 500\,hPa geopotential-height anomaly, or z500. (b) Prediction by GraphCast. (c) Prediction by our WaveGSSM. Yellow markers show the predicted centers and displacement to the ground-truth (marked with black contours and markers). GraphCast$^{*}$ captures the broad pattern but shifts it relative to ERA5, whereas WaveGSSM keeps the structure better aligned, with correlation increasing from $0.68$ to $0.88$.}

\label{fig_intro_atmosphere}
\end{figure}

We therefore introduce WaveGSSM, a wave-inspired Graph State Space Model for propagating spatio-temporal signals. Following second-order wave dynamics, each node maintains two coupled latent states. The pattern state represents the current latent pattern, while the motion state represents its temporal rate of change. A graph-wave transition updates the motion state through graph interactions and then uses the updated motion to advance the pattern state. Spatial propagation and temporal evolution are therefore coupled within a single state transition. WaveGSSM explicitly incorporates motion into the evolving graph state rather than leaving it to be inferred implicitly by a separate temporal module.

\textbf{Our main contributions can be summarized as follows:}
\begin{itemize}
\item We show that the present field can be an incomplete predictive state for propagating graph signals. Paired synthetic trajectories and a theoretical modal analysis show how different recent motion can produce different futures from the same present field, motivating the explicit representation of motion in the graph state.
\item We introduce WaveGSSM, a wave-inspired graph state-space model with coupled pattern and motion states. Its second-order graph-wave transition advances the two states jointly, coupling spatial propagation and temporal evolution in a single transition.
\item We demonstrate strong performance across four temporal-graph benchmarks and global weather forecasting. WaveGSSM achieves the best mean performance on all four temporal-graph benchmarks and reduces the geopotential RMSE by $20.2\%$ on average for 1- to 5-day forecasts relative to a backbone-matched snapshot model, with larger pattern gains when propagation becomes more important.
\end{itemize}

\section{Related Work}
\label{sec:related}

\paragraph{Spatio-temporal graph learning.}
Many spatio-temporal graph models factor spatial and temporal modeling by first encoding each graph snapshot and then connecting the resulting representations through recurrence~\cite{STGCN2018,DGCN2023,cini2023taming}, attention~\cite{karmim2024supra}, or state-space sequence models \citep{li2024graphssm}. Recent work has also explored generative pre-training \citep{li2023gptst}, continuous operators on product graphs \citep{einizade2024citrus}, and continual adaptation to streaming graphs \citep{chen2025expand}. Across these formulations, motion is typically inferred from successive representations rather than represented as part of the evolving graph state. WaveGSSM instead couples the current latent pattern and its recent motion within a single dynamical state.

\paragraph{Graph state-space models.}
GraphSSM is the closest prior work in this family \citep{li2024graphssm}. It brings state-space sequence modeling to temporal graphs by incorporating evolving graph structure into the temporal processor. Related approaches include GraphMamba for graph representation learning \citep{behrouz2024graphmamba} and message-passing state-space models that combine sequence modeling with graph propagation \citep{ceni2025mpssm}. Our new WaveGSSM differs in the structure of the state itself. Its latent state explicitly contains coupled pattern and motion states whose joint evolution determines the next graph transition.

\paragraph{Physics-inspired graph dynamics.}
Differential equations have been used to structure graph representation learning. GraphCON builds GNNs from graph-coupled oscillator dynamics \citep{rusch2022graphcon}, Graph Wave Networks formulates message passing as wave propagation \citep{yue2025graphwave}, and TDE-GNN models temporal dependencies with differential-equation-inspired graph dynamics \citep{eliasof2024temporal}. However, these approaches mainly use the dynamics to organize representation propagation within the graph network, rather than to model the motion state of an observed temporal graph process. In this paper, WaveGSSM instead uses graph-wave dynamics as the recurrent state transition along the observed sequence to solve the spatio-temporal pattern learning problem. Specifically, its pattern and motion states jointly encode the current latent pattern and its temporal change, allowing spatial propagation and temporal evolution to be advanced in the same transition.
\begin{figure}[th]
\centering
\includegraphics[width=\linewidth]{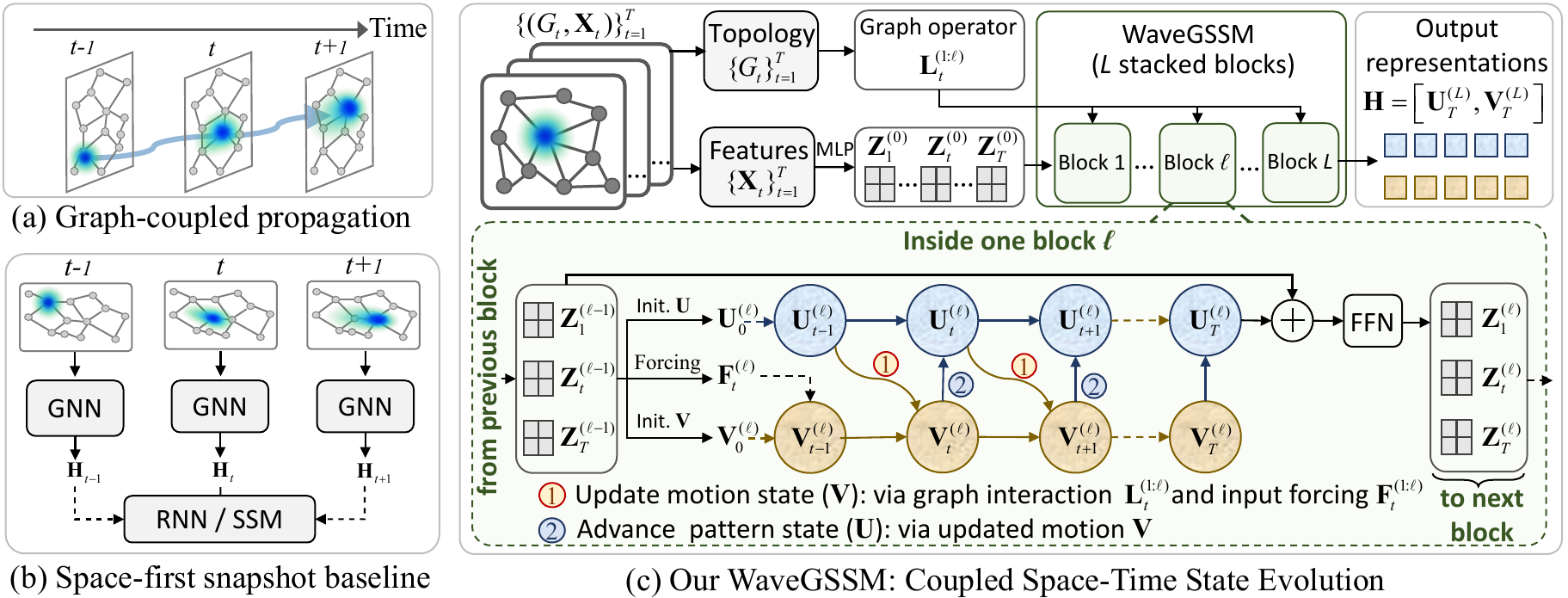}
\caption{Motivation and architecture of WaveGSSM. (a) Spatio-temporal graph signals contain patterns that propagate through graph edges over time rather than evolving as independent snapshots. (b) Snapshot baselines first encode each snapshot with a GNN and then connect the representations with a temporal module such as an RNN or SSM, yielding a space-then-time pipeline. (c) WaveGSSM instead rolls forward coupled pattern and motion states within each graph-wave block. The pattern state $\mathbf{U}_t$ represents the current latent pattern, while the motion state $\mathbf{V}_t$ represents its temporal rate of change. Graph interaction and an input-dependent forcing update $\mathbf{V}_t$, which then advances $\mathbf{U}_t$, coupling spatial propagation and temporal evolution within a single rollout.}
\label{fig_method_frame}
\end{figure}

\section{Method}
\label{sec:method}

We consider an observed temporal graph sequence $\{(G_t,\m X_t)\}_{t=1}^{T}$, where $G_t=(\mathcal V,E_t)$, $N=|\mathcal V|$, and $\m X_t\in\R^{N\times F_{\mathrm{in}}}$. The goal is to predict future node fields or labels from the observed sequence.

\paragraph{Overview.}
WaveGSSM first maps each snapshot to a hidden representation $\m Z_t^{(0)}\in\R^{N\times d}$ and then processes the complete hidden sequence with $L$ stacked graph-wave blocks. Block $\ell$ takes $\{\m Z_t^{(\ell-1)}\}_{t=1}^{T}$ as input, performs a recurrent graph-coupled rollout over time, and returns $\{\m Z_t^{(\ell)}\}_{t=1}^{T}$ to the next block. After the final block, a task-specific readout produces the prediction. \Cref{fig_method_frame} (c) illustrates this overall architecture, and \Cref{alg:wavegssm} summarizes the forward pass.

The key difference from a space-then-time architecture (\Cref{fig_method_frame} (b)) is the state evolution within each block. WaveGSSM maintains a \emph{pattern state} $\m U_t^{(\ell)}$, representing the current latent pattern, and a \emph{motion state} $\m V_t^{(\ell)}$, representing its temporal rate of change. Both states are initialized from the incoming hidden sequence $\{\m Z_t^{(\ell-1)}\}_{t=1}^{T}$. At each temporal step, graph interactions and the current hidden input update the motion state, which then advances the pattern state. The evolved pattern is injected back into the hidden stream before being passed to the next block. Thus, $\m Z_{1:T}^{(\ell)}$ propagates information across network depth, while $(\m U_t^{(\ell)},\m V_t^{(\ell)})$ form the recurrent state evolving over time within a block.

We next define the graph propagation operator (\Cref{sec:medium}), the coupled pattern--motion evolution (\Cref{sec:wave}), and its realization as a residual WaveGSSM block with task readout (\Cref{sec:block}).

\subsection{Graph Propagation Operator}
\label{sec:medium}
The propagation operator acts on a support graph $G_{\star}=(\mathcal V,E_{\star})$, which specifies the candidate edges for latent interaction. For fixed-mesh tasks, $G_{\star}$ is the prescribed mesh. For evolving graphs, it is constructed once from the observed structural sequence as detailed in Appendix~\ref{app:support}.

Let $\m B_{\star}\in\R^{|E_{\star}|\times N}$ be an incidence matrix of the support graph, and let $\m a_t\in\{0,1\}^{|E_{\star}|}$ indicate which support edges are active at step $t$.
Block $\ell$ learns the positive base edge strengths $\bar{\m w}^{(\ell)}\in\R^{|E_{\star}|}$ and defines the weighted graph Laplacian
\begin{equation}
\m L_t^{(\ell)}
=
\m B_{\star}^{\top}
\diag
\left(
\m a_t
\odot
\bar{\m w}^{(\ell)}
\right)
\m B_{\star}
\in
\R^{N\times N}.
\label{eq:Lt}
\end{equation}
The resulting operator is sparse, symmetric, and positive semidefinite. It becomes time-dependent when the graph evolves. For a fixed graph, all support edges remain active and the learned operator is shared across time. The operator therefore determines where latent motion can propagate and how strongly neighboring nodes interact.

\subsection{Coupled Pattern--Motion State Evolution}
\label{sec:wave}

Within block $\ell$, we consider continuous pattern and motion states $\m U^{(\ell)}(\tau),\m V^{(\ell)}(\tau)\in\R^{N\times d}$, where $\tau$ denotes continuous latent time. The pattern state represents the current latent pattern, while the motion state represents its rate of change,
\begin{equation}
\m V^{(\ell)}(\tau)
=
\frac{d}{d\tau}
\m U^{(\ell)}(\tau).
\end{equation}
Under the graph-wave formulation, $\m U^{(\ell)}$ and $\m V^{(\ell)}$ are interpreted as displacement and velocity, respectively. For the continuous-time formulation, we denote the graph operator and forcing by $\m L^{(\ell)}(\tau)$ and $\m F^{(\ell)}(\tau)$, whose discrete counterparts are $\m L_t^{(\ell)}$ and $\m F_t^{(\ell)}$.

We motivate their evolution with a forced and damped graph-wave equation
\begin{equation}
\frac{d^2}{d\tau^2}
\m U^{(\ell)}(\tau)
+
\frac{d}{d\tau}
\m U^{(\ell)}(\tau)
\mathbf{\Gamma}^{(\ell)}
+
\m L^{(\ell)}(\tau)
\m U^{(\ell)}(\tau)
\m K^{(\ell)}
=
\m F^{(\ell)}(\tau).
\label{eq:wave_ct}
\end{equation}
Here, $\m K^{(\ell)}\in\R^{d\times d}$ and $\mathbf{\Gamma}^{(\ell)}\in\R^{d\times d}$ are learnable positive diagonal matrices. The former controls graph-interaction strength across latent channels, while the latter controls damping and motion persistence. The forcing $\m F^{(\ell)}(\tau)\in\R^{N\times d}$ introduces information from the input sequence, and the graph force $-\m L^{(\ell)}(\tau)\m U^{(\ell)}(\tau)\m K^{(\ell)}$ couples neighboring nodes.

Using the motion state defined above, Eq.~(\ref{eq:wave_ct}) can equivalently be written as the first-order system
\begin{align}
\frac{d}{d\tau}
\m V^{(\ell)}(\tau)
&=
-
\m V^{(\ell)}(\tau)
\mathbf{\Gamma}^{(\ell)}
-
\m L^{(\ell)}(\tau)
\m U^{(\ell)}(\tau)
\m K^{(\ell)}
+
\m F^{(\ell)}(\tau),
\\
\frac{d}{d\tau}
\m U^{(\ell)}(\tau)
&=
\m V^{(\ell)}(\tau).
\end{align}

To realize these dynamics as a recurrent state transition within each WaveGSSM block, we discretize the system using a semi-implicit Euler scheme~\citep{cromer1981stable}. For block $\ell$, let $\dt^{(\ell)}>0$ be a learnable step size and define $\tau_t^{(\ell)} = t\,\dt^{(\ell)}$ for $t=0,\ldots,T$. We denote the corresponding discrete states by $\m U_t^{(\ell)} \approx \m U^{(\ell)}(\tau_t^{(\ell)})$ and $\m V_t^{(\ell)} \approx \m V^{(\ell)}(\tau_t^{(\ell)})$. At rollout step $t$, $\m L_t^{(\ell)}$ is the graph operator defined in \Cref{sec:medium}, while $\m F_t^{(\ell)}$ is generated from the incoming hidden representation as specified in \Cref{sec:block}. The discrete WaveGSSM transition is
\begin{align}
\m V_t^{(\ell)}
&=
\left[
\m V_{t-1}^{(\ell)}
+
\dt^{(\ell)}
\left(
-
\m L_t^{(\ell)}
\m U_{t-1}^{(\ell)}
\m K^{(\ell)}
+
\m F_t^{(\ell)}
\right)
\right]
\left(
\I_d
+
\dt^{(\ell)}
\mathbf{\Gamma}^{(\ell)}
\right)^{-1},
\label{eq:update_v}
\\
\m U_t^{(\ell)}
&=
\m U_{t-1}^{(\ell)}
+
\dt^{(\ell)}
\m V_t^{(\ell)}.
\label{eq:update_u}
\end{align}
A derivation of Eqs.~(\ref{eq:update_v})--(\ref{eq:update_u}) is given in Appendix~\ref{app:discretization}. The motion state is updated first with damping treated implicitly, and the updated motion then advances the pattern state. Here, $\I_d\in\R^{d\times d}$ is the identity matrix. Because $\mathbf{\Gamma}^{(\ell)}$ is positive and diagonal, the inverse in Eq.~(\ref{eq:update_v}) is implemented by elementwise division. Through this two-stage update, spatial propagation and temporal evolution are coupled within the same transition. Graph interactions update the motion state, and the updated motion advances the pattern, enabling oscillatory and propagating dynamics. Observation~\ref{thm:velocity_phase} illustrates this standard second-order state dependence in the graph-wave setting.

\begin{observation}[Pattern state alone does not determine graph-wave evolution]
\label{thm:velocity_phase}
Let
$\m u(\tau)\in\R^N$
satisfy
\[
\ddot{\m u}(\tau)
+
\gamma\dot{\m u}(\tau)
+
\kappa\m L\m u(\tau)
=
\Zero,
\]
where
$\m L\succeq0$,
$\kappa>0$,
and
$\gamma\geq0$.
For an eigenpair
$(\lambda_i,\m q_i)$
of
$\m L$
with
$\lambda_i>0$,
define
$a_i(\tau)=\m q_i^\top\m u(\tau)$.
If
$\kappa\lambda_i>\gamma^2/4$
and
$\omega_i=(\kappa\lambda_i-\gamma^2/4)^{1/2}$,
then
\begin{equation}
a_i(\tau)
=
e^{-\gamma \tau/2}
\left[
a_i(0)\cos(\omega_i \tau)
+
\frac{
\dot a_i(0)
+
\frac{\gamma}{2}a_i(0)
}{
\omega_i
}
\sin(\omega_i \tau)
\right].
\label{eq:modal_phase_solution}
\end{equation}
The future graph-wave state, therefore, depends on both the displacement $a_i(0)$ and velocity $\dot a_i(0)$, which correspond to modal components of the pattern and motion states in WaveGSSM.
\end{observation}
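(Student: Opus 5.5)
The plan is to reduce the vector ODE to decoupled scalar ODEs by projecting onto the eigenbasis of $\m L$, and then solve the resulting damped harmonic oscillator explicitly. Since $\m L$ is symmetric positive semidefinite (as guaranteed by its construction in Eq.~(\ref{eq:Lt})), it admits an orthonormal eigenbasis. We may therefore take $\m q_i$ with $\m q_i^\top\m q_i=1$ and use $\m q_i^\top\m L=\lambda_i\m q_i^\top$. Left-multiplying the equation $\ddot{\m u}+\gamma\dot{\m u}+\kappa\m L\m u=\Zero$ by $\m q_i^\top$ and using linearity of differentiation gives the scalar equation
\[
\ddot a_i(\tau)+\gamma\dot a_i(\tau)+\kappa\lambda_i a_i(\tau)=0,
\]
with initial data $a_i(0)=\m q_i^\top\m u(0)$ and $\dot a_i(0)=\m q_i^\top\dot{\m u}(0)$.

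Next I will solve this constant-coefficient linear ODE. The characteristic polynomial $r^2+\gamma r+\kappa\lambda_i$ has roots $r=-\gamma/2\pm i\omega_i$, and these are complex conjugates precisely because $\kappa\lambda_i>\gamma^2/4$, which makes $\omega_i$ real and positive. Hence the general real solution is
\[
a_i(\tau)=e^{-\gamma\tau/2}\bigl[C_1\cos(\omega_i\tau)+C_2\sin(\omega_i\tau)\bigr].
\]
Evaluating at $\tau=0$ gives $C_1=a_i(0)$. Differentiating and evaluating at $\tau=0$ gives $\dot a_i(0)=-\tfrac{\gamma}{2}C_1+\omega_i C_2$, so $C_2=(\dot a_i(0)+\tfrac{\gamma}{2}a_i(0))/\omega_i$. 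This is exactly Eq.~(\ref{eq:modal_phase_solution}). Uniqueness of solutions to linear ODEs with given initial position and velocity ensures that this is the solution, not merely a solution.

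Finally, for the interpretive claim, I will note that the map $(a_i(0),\dot a_i(0))\mapsto a_i(\tau)$ is linear. Its coefficient on $\dot a_i(0)$ is $e^{-\gamma\tau/2}\sin(\omega_i\tau)/\omega_i$, which is nonzero for every $\tau\notin\pi\mathbb{Z}/\omega_i$. Hence two trajectories with identical $a_i(0)$ but different $\dot a_i(0)$ have different futures at such $\tau$. So the pattern state alone does not determine the evolution, and the velocity (motion state) must be carried.

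No step is a real obstacle. The only points needing care are justifying the decoupling via the symmetry of $\m L$, so that $\m q_i^\top\m L=\lambda_i\m q_i^\top$, and the differentiation used to fix $C_2$.
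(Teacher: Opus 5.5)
Your proposal is correct and matches the paper's proof: you use the symmetry of $\m L$ to project the equation onto $\m q_i$ and obtain a scalar damped oscillator, solve it, fix the constants from $a_i(0)$ and $\dot a_i(0)$, and then note that the dependence on $\dot a_i(0)$ has coefficient $e^{-\gamma\tau/2}\sin(\omega_i\tau)/\omega_i$, which is nonzero whenever $\sin(\omega_i\tau)\neq0$. The only cosmetic difference is that you read off the characteristic roots $-\gamma/2\pm\mathrm{i}\omega_i$, whereas the paper substitutes $a_i=e^{-\gamma\tau/2}b_i$ to reduce to the undamped equation $\ddot b_i+\omega_i^2 b_i=0$.
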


The proof is given in Appendix~\ref{app:proof_velocity_phase}. Observation~\ref{thm:velocity_phase} explains the phenomenon illustrated in \Cref{fig_intro_rollout}. Two trajectories can share the same pattern state while evolving differently when their motion states differ. WaveGSSM represents both components explicitly through $(\m U_t,\m V_t)$.

For a fixed graph operator, the semi-implicit rollout is non-amplifying for oscillatory graph modes under suitable conditions, as shown in Appendix~\ref{app:stability}.

\subsection{From Graph-Wave Dynamics to WaveGSSM}
\label{sec:block}

Next, we embed the graph-wave transition in a residual neural block. Each observed snapshot is first mapped to hidden width $d$ by a shared node-wise input projection $\phi_{\mathrm{in}}:\R^{N\times F_{\mathrm{in}}}\rightarrow\R^{N\times d}$,
\begin{equation}
\m Z_t^{(0)}
=
\phi_{\mathrm{in}}(\m X_t)
\in
\R^{N\times d},
\qquad
t=1,\ldots,T.
\end{equation}
Block $\ell$ receives the hidden sequence $\{\m Z_t^{(\ell-1)}\}_{t=1}^{T}$ and yields an updated sequence of the same shape.

The first hidden frame initializes the pattern state. When at least two frames are available, their hidden difference initializes the motion state
\begin{align}
\m U_0^{(\ell)}
&=
\psi_U^{(\ell)}
\left(
\LN
\left(
\m Z_1^{(\ell-1)}
\right)
\right),
\\
\m V_0^{(\ell)}
&=
\psi_V^{(\ell)}
\left(
\LN
\left(
\m Z_2^{(\ell-1)}
\right)
-
\LN
\left(
\m Z_1^{(\ell-1)}
\right)
\right).
\end{align}
Here, $\LN$ denotes the layer normalization, and $\psi_U^{(\ell)},\psi_V^{(\ell)}:\R^{N\times d}\rightarrow\R^{N\times d}$ are lightweight node-wise maps with independent parameters. For a single-frame input, we set $\m V_0^{(\ell)}=\Zero$. The initialization, therefore, provides each block with both a latent pattern and an input-dependent motion cue.

Each hidden snapshot also produces a forcing term
\begin{equation}
\m F_t^{(\ell)}
=
g^{(\ell)}
\left(
\m Z_t^{(\ell-1)}
\right)
\in
\R^{N\times d},
\label{eq:forcing_block}
\end{equation}
where $g^{(\ell)}:\R^{N\times d}\rightarrow\R^{N\times d}$ is a lightweight node-wise forcing map. Starting from $(\m U_0^{(\ell)},\m V_0^{(\ell)})$, the block applies Eqs.~(\ref{eq:update_v}) and~(\ref{eq:update_u}) along the input sequence using $\{\m L_t^{(\ell)}\}_{t=1}^{T}$ and $\{\m F_t^{(\ell)}\}_{t=1}^{T}$.

The evolved pattern state is returned to the hidden stream through a residual update
\begin{align}
\hat{\m Z}_t^{(\ell)}
&=
\m Z_t^{(\ell-1)}
+
\Drop
\left(
\LN
\left(
\m U_t^{(\ell)}
\right)
\right),
\label{eq:block_residual_1}
\\
\m Z_t^{(\ell)}
&=
\hat{\m Z}_t^{(\ell)}
+
\Drop
\left(
\FFN
\left(
\LN
\left(
\hat{\m Z}_t^{(\ell)}
\right)
\right)
\right).
\label{eq:block_residual_2}
\end{align}
Here, $\Drop$ denotes dropout, and $\FFN$ is a standard node-wise feed-forward network. The motion state remains part of the internal recurrent state and is passed between consecutive graph-wave transitions. Stacking $L$ blocks alternates motion-carrying state evolution with feature refinement. Full definitions of the projection, initialization, forcing, and residual maps are given in Appendix~\ref{app:method_details}.

\paragraph{Task readout.}
\label{sec:readout}
After $L$ WaveGSSM blocks, the final block produces the  hidden representation $\m Z_T^{(L)}$ and the terminal pattern--motion state $(\m U_T^{(L)},\m V_T^{(L)})$. For the state readout, we form
\begin{equation}
\m H
=
\left[
\m U_T^{(L)},
\m V_T^{(L)}
\right]
\in
\R^{N\times2d}.
\label{eq:readout}
\end{equation}
A task-specific decoder maps the selected representation to node labels or future fields. Temporal node classification uses either $\m H$ or $\m Z_T^{(L)}$, as detailed in Appendix~\ref{app:readout_details}. For mesh-based forecasting, $\m H$ is projected to the decoder width and mapped back to the target grid, conditioning the forecast on both terminal pattern and motion states.

\section{Experiments}
\label{sec:experiments}
\subsection{Experimental Setup}
\label{sec:exp_setup}

We evaluate WaveGSSM on temporal node classification and global weather forecasting. The former tests whether the proposed state evolution generalizes beyond explicitly physical processes, while the latter evaluates the motivating setting of multi-step spatial propagation.

\paragraph{Temporal graph node classification.}
We evaluate WaveGSSM on DBLP-3, Brain, Reddit, and DBLP-10 from the GraphSSM benchmark \citep{li2024graphssm}, reporting Micro-F1 and Macro-F1. Baselines include graph embedding methods, discrete-time temporal GNNs, and GraphSSM. Full dataset, split, model, and implementation details are provided in Appendix~\ref{app:nodeclf}.

\paragraph{Global weather forecasting.}
We construct a daily global forecasting task from ERA5 \citep{hersbach2020era5,lam2023graphcast} on a $1^{\circ}$ latitude--longitude grid with a global icosahedral mesh. The model jointly forecasts geopotential at six pressure levels, daily-maximum 2\,m temperature, and level-1 soil moisture. We train on 1979--2023, validate on 2024, and evaluate on 354 initializations from 2025 using latitude-weighted RMSE and centred anomaly correlation.

Because the original GraphCast setting differs in forecast interval, resolution, and evaluation period, we train GraphCast$^{*}$ under the same daily ERA5 setting as WaveGSSM. We also include GraphSSM \citep{li2024graphssm} and GNN-LSTM as state-space and recurrent baselines. Full experimental details are provided in Appendix~\ref{app:setup}, with additional motion-state analysis in Appendix~\ref{app:weather_velocity_ablation}.

\subsection{Generalization to Temporal Graph Benchmarks}
\label{sec:temporal_graph_results}

We first examine whether the motion-carrying state is also useful for general temporal graph tasks. As shown in \Cref{tab:small_scale_temporal_graphs}, WaveGSSM obtains the highest mean result on all eight reported metrics across DBLP-3, Brain, Reddit, and DBLP-10. The gains are most pronounced on DBLP-3 and Brain, while the margins are smaller on Reddit and DBLP-10. These results show that the proposed state evolution also provides an effective representation mechanism for discrete-time temporal graphs.

\begin{table}[h]
\centering
\caption{Node-classification performance (\%) on four temporal graph benchmarks.
Mean $\pm$ standard deviation over five runs where available.
Higher is better. Best and second-best results are highlighted in
\textcolor{besttext}{\textbf{green}} and \textcolor{secondtext}{\textbf{blue}}, respectively.}
\label{tab:small_scale_temporal_graphs}

\resizebox{\linewidth}{!}{%
\begin{tabular}{@{}lcccccccc@{}}
\toprule
& \multicolumn{2}{c}{DBLP-3}
& \multicolumn{2}{c}{Brain}
& \multicolumn{2}{c}{Reddit}
& \multicolumn{2}{c}{DBLP-10} \\
\cmidrule(lr){2-3}
\cmidrule(lr){4-5}
\cmidrule(lr){6-7}
\cmidrule(lr){8-9}
& Micro-F1 & Macro-F1
& Micro-F1 & Macro-F1
& Micro-F1 & Macro-F1
& Micro-F1 & Macro-F1 \\
\midrule

DeepWalk
& 47.53$_{\pm 0.4}$ & 47.21$_{\pm 0.2}$
& 51.45$_{\pm 0.6}$ & 51.03$_{\pm 0.8}$
& 26.82$_{\pm 0.6}$ & 26.75$_{\pm 0.4}$
& 66.38 & 67.12 \\

Node2Vec
& 48.79$_{\pm 0.3}$ & 48.42$_{\pm 0.4}$
& 53.51$_{\pm 0.5}$ & 52.95$_{\pm 0.6}$
& 25.47$_{\pm 0.6}$ & 25.44$_{\pm 0.5}$
& 67.31 & 66.93 \\

\midrule

HTNE
& 48.98$_{\pm 0.2}$ & 48.74$_{\pm 0.3}$
& 22.31$_{\pm 0.8}$ & 22.12$_{\pm 0.5}$
& 26.96$_{\pm 0.5}$ & 26.80$_{\pm 0.7}$
& 68.79 & 68.36 \\

M$^2$DNE
& 49.12$_{\pm 0.5}$ & 48.87$_{\pm 0.4}$
& 23.79$_{\pm 0.5}$ & 23.54$_{\pm 0.4}$
& 25.79$_{\pm 0.6}$ & 25.61$_{\pm 0.4}$
& 69.71 & 69.75 \\

DynamicTriad
& 48.78$_{\pm 0.5}$ & 48.63$_{\pm 0.6}$
& 21.71$_{\pm 0.7}$ & 21.94$_{\pm 0.7}$
& 28.76$_{\pm 0.5}$ & 28.51$_{\pm 0.5}$
& 66.95 & 66.42 \\

\midrule

MPNN
& 81.78$_{\pm 0.6}$ & 81.46$_{\pm 1.2}$
& 90.97$_{\pm 1.4}$ & 91.01$_{\pm 1.5}$
& 40.85$_{\pm 1.3}$ & 40.64$_{\pm 1.2}$
& 67.74$_{\pm 0.3}$ & 65.05$_{\pm 0.5}$ \\

STAR
& \cellcolor{secondbg}84.74$_{\pm 1.0}$
& \cellcolor{secondbg}84.20$_{\pm 1.2}$
& 92.08$_{\pm 1.3}$ & 92.23$_{\pm 1.3}$
& 43.42$_{\pm 2.3}$ & 43.43$_{\pm 2.4}$
& 72.98$_{\pm 1.5}$ & 72.03$_{\pm 1.2}$ \\

tNodeEmbed
& 84.51$_{\pm 1.2}$ & 83.57$_{\pm 1.1}$
& 92.35$_{\pm 0.8}$ & 92.30$_{\pm 1.0}$
& 42.11$_{\pm 1.8}$ & 42.06$_{\pm 1.3}$
& 74.19$_{\pm 1.8}$ & 74.23$_{\pm 2.2}$ \\

EvolveGCN
& 84.01$_{\pm 1.5}$ & 83.12$_{\pm 1.5}$
& 92.20$_{\pm 1.3}$ & 92.00$_{\pm 1.0}$
& 41.24$_{\pm 1.3}$ & 41.11$_{\pm 1.5}$
& 71.32$_{\pm 0.5}$ & 71.20$_{\pm 0.7}$ \\

SpikeNet
& 83.92$_{\pm 1.5}$ & 83.04$_{\pm 1.1}$
& 92.00$_{\pm 1.2}$ & 91.97$_{\pm 1.2}$
& 40.42$_{\pm 2.0}$ & 40.20$_{\pm 2.1}$
& 74.86$_{\pm 0.5}$ & 74.65$_{\pm 0.5}$ \\

ROLAND
& 84.21$_{\pm 1.4}$ & 84.06$_{\pm 1.5}$
& 92.14$_{\pm 1.2}$ & 91.85$_{\pm 1.1}$
& 44.22$_{\pm 2.2}$ & 44.25$_{\pm 1.9}$
& 75.01$_{\pm 1.1}$ & 74.98$_{\pm 1.0}$ \\

GraphSSM
& 83.52$_{\pm 1.6}$ & 82.98$_{\pm 1.5}$
& \cellcolor{secondbg}92.36$_{\pm 0.8}$
& \cellcolor{secondbg}92.75$_{\pm 0.8}$
& \cellcolor{secondbg}46.79$_{\pm 1.9}$
& \cellcolor{secondbg}46.65$_{\pm 2.1}$
& \cellcolor{secondbg}76.29$_{\pm 0.7}$
& \cellcolor{secondbg}75.51$_{\pm 0.8}$ \\

\midrule

\textbf{WaveGSSM (Ours)}
& \cellcolor{bestbg}$\mathbf{86.34}_{\pm 1.5}$
& \cellcolor{bestbg}$\mathbf{86.26}_{\pm 1.6}$
& \cellcolor{bestbg}$\mathbf{93.36}_{\pm 0.7}$
& \cellcolor{bestbg}$\mathbf{93.64}_{\pm 0.5}$
& \cellcolor{bestbg}$\mathbf{47.47}_{\pm 1.7}$
& \cellcolor{bestbg}$\mathbf{47.51}_{\pm 1.7}$
& \cellcolor{bestbg}$\mathbf{76.38}_{\pm 0.4}$
& \cellcolor{bestbg}$\mathbf{76.12}_{\pm 0.5}$ \\

\bottomrule
\end{tabular}%
}
\end{table}

\subsection{Global Weather Forecasting}
\label{sec:weather_results}
We next evaluate WaveGSSM on global weather forecasting, a real-world setting with propagating atmospheric patterns. The model forecasts eight ERA5 target fields. \Cref{fig:exp01_skill} highlights four variables: the geopotential height at 500 and 200\,hPa, the daily-maximum 2\,m temperature, and the level-1 soil moisture. RMSE measures pointwise error, while anomaly correlation (AC) measures the large-scale pattern agreement. Our results for the remaining pressure levels are given in Appendix~\ref{app:more_forecasting}.

WaveGSSM achieves the lowest RMSE and highest AC for every displayed variable and lead time. Relative to the backbone-matched GraphCast$^{*}$ baseline, it reduces geopotential RMSE by $20.2\%$ on average over all six pressure levels and forecast days~1--5. The lead-time dependence is especially revealing. For z500, the RMSE reduction decreases from $38.1\%$ at day~1 to $7.0\%$ at day~5. In contrast, the AC advantage becomes more pronounced toward the medium range, reaching its strongest level around one week. Thus, while the pointwise-error margin narrows with lead, WaveGSSM better maintains large-scale pattern agreement over medium-range forecasts. For propagating patterns, this distinction matters because a forecast can remain locally plausible even as spatial displacement accumulates over time.

Importantly, WaveGSSM also outperforms GraphSSM and GNN-LSTM across the displayed forecasts.
The advantage, therefore, extends beyond the backbone-matched snapshot comparison and remains present against both graph state-space and recurrent temporal models.
The controlled motion-state ablations in \Cref{sec:velocity_ablation} next examine whether this advantage depends on carrying the proposed motion state.

\begin{figure}[t]
\centering
\includegraphics[width=\linewidth]{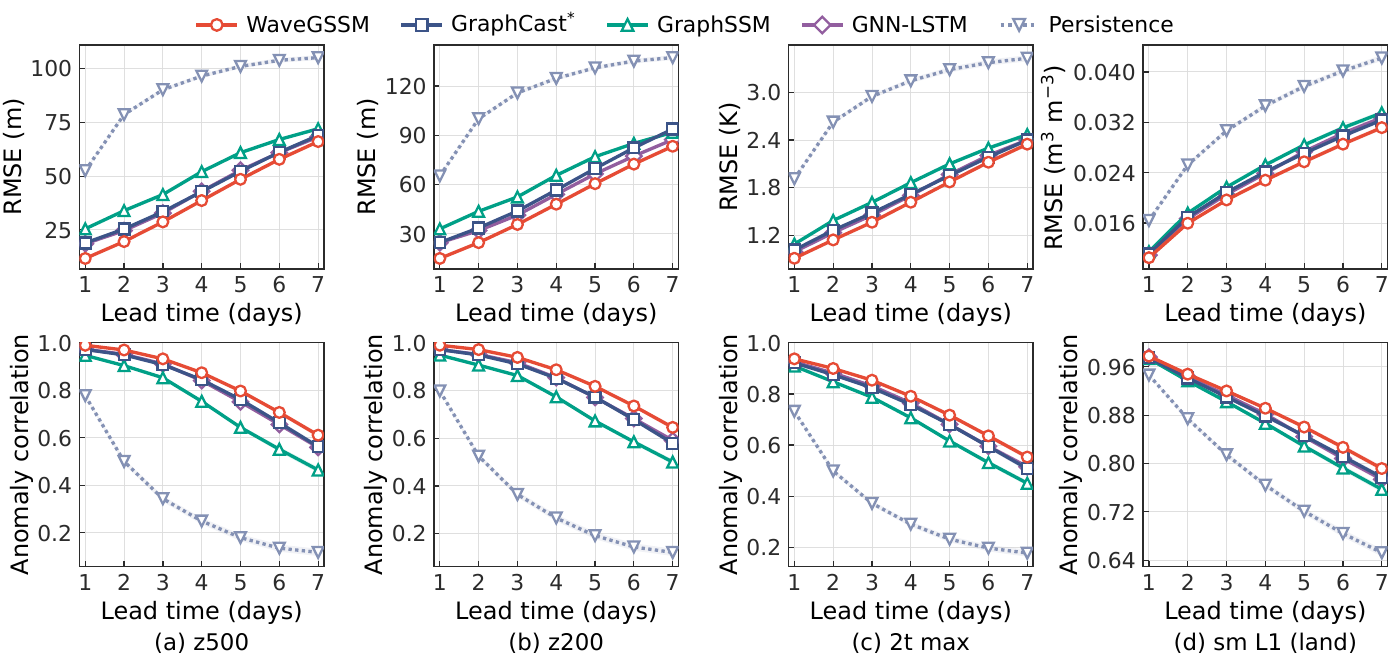}
\caption{Forecast skill on the 2025 ERA5 test set. Latitude-weighted RMSE and centred anomaly correlation over forecast days~1--7 for (a) 500\,hPa geopotential height, (b) 200\,hPa geopotential height, (c) daily-maximum 2\,m temperature, and (d) level-1 soil moisture over land. WaveGSSM achieves the lowest RMSE and highest anomaly correlation across all displayed fields and leads.}
\label{fig:exp01_skill}
\end{figure}

\subsection{Ablating Motion-State Carry}
\label{sec:velocity_ablation}

The previous comparisons involve different model families. Here, we retrain WaveGSSM variants to test whether carrying the motion state contributes to predictive performance.

For a temporal node classification, the no-carry variant preserves the input-derived initial motion state and the instantaneous graph response, but discards the motion state produced by each transition before the next. This removes persistent motion-state carry, while keeping the rest of the model and training procedure unchanged. Implementation details are provided in Appendix~\ref{app:nodeclf_velocity_ablation}.

\begin{table}[t]
\centering
\caption{Retrained ablation of persistent motion-state carry. Both variants use the same dataset-specific architecture, parameter count, training protocol, and validation rule. Values are mean $\pm$ standard deviation over five seeds. Higher is better.}
\label{tab:velocity_carry_ablation}
\setlength{\tabcolsep}{4.0pt}
\renewcommand{\arraystretch}{1.08}
\small
\resizebox{\linewidth}{!}{%
\begin{tabular}{@{}lcccccc@{}}
\toprule
&
\multicolumn{2}{c}{DBLP-3}
&
\multicolumn{2}{c}{Brain}
&
\multicolumn{2}{c}{Reddit} \\
\cmidrule(lr){2-3}
\cmidrule(lr){4-5}
\cmidrule(lr){6-7}
Model
& Micro-F1
& Macro-F1
& Micro-F1
& Macro-F1
& Micro-F1
& Macro-F1 \\
\midrule

\textbf{WaveGSSM}
& \cellcolor{bestbg}$\mathbf{86.34}_{\pm 1.54}$
& \cellcolor{bestbg}$\mathbf{86.26}_{\pm 1.58}$
& \cellcolor{bestbg}$\mathbf{93.36}_{\pm 0.66}$
& \cellcolor{bestbg}$\mathbf{93.64}_{\pm 0.47}$
& \cellcolor{bestbg}$\mathbf{47.47}_{\pm 1.68}$
& \cellcolor{bestbg}$\mathbf{47.51}_{\pm 1.73}$ \\

\begin{tabular}[c]{@{}l@{}}
WaveGSSM w/o 
motion-state carry
\end{tabular}
& $84.51_{\pm 1.41}$
& $84.45_{\pm 1.55}$
& $90.00_{\pm 0.66}$
& $90.65_{\pm 0.92}$
& $45.40_{\pm 0.59}$
& $45.27_{\pm 0.44}$ \\

\bottomrule
\end{tabular}%
}
\end{table}

On the three temporal-graph datasets included in this ablation, removing persistent motion-state carry lowers both Micro-F1 and Macro-F1 by $1.81\%$--$3.36\%$. The ablated model remains competitive on DBLP-3, but falls below GraphSSM on Brain and Reddit. Since it is retrained independently, the remaining gap indicates that persistent motion-state carry contributes to the performance gains.

The same trend holds for weather forecasting. The parameter-matched variant resets the incoming motion state before every internal transition, including the first, while retaining the instantaneous response within each transition. After independent retraining, mean geopotential RMSE over all six pressure levels and forecast days~1--5 increases from $30.72$ to $31.53$\,m, a $2.63\%$ increase. The ablated model also has a higher z200 RMSE at every reported lead from day~1 to day~10. The complete results are given in Appendix~\ref{app:weather_retrained_no_carry}.

\subsection{When Does the Motion-Carrying State Help?}
\label{sec:analysis}

The ablations above show that carrying the motion state substantially contributes to the predictive performance. We now ask when the weather-forecasting advantage becomes strongest and whether it arises from a better propagation of coherent structure rather than stronger spatial smoothing. If motion matters, the gain should increase, when the large-scale wave activity is stronger and as the displacement errors accumulate with the forecast lead.

\begin{figure}[t]
\centering
\includegraphics[width=\linewidth]{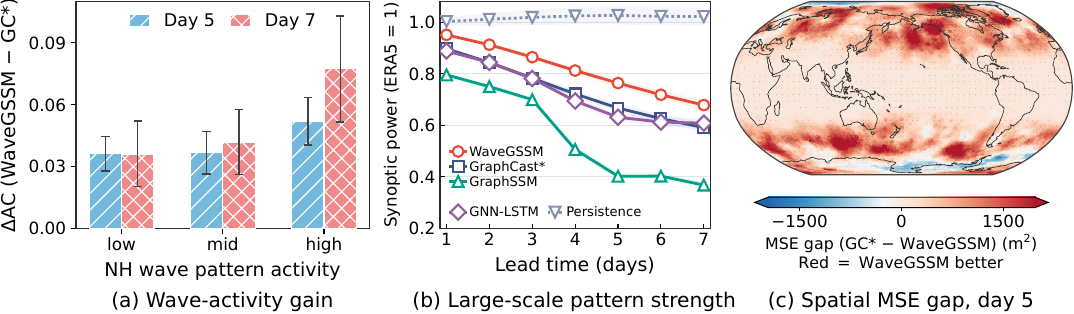}
\caption{When and where WaveGSSM improves, and how well synoptic structure is preserved. Panel (a) shows the z500 anomaly-correlation gain $\Delta\mathrm{AC}=\mathrm{AC}_{\mathrm{WaveGSSM}}-\mathrm{AC}_{\mathrm{GC}^{*}}$ relative to GraphCast$^{*}$. Panel (a) groups days~5 and~7 by Northern-Hemisphere extratropical wave activity. Panel (b) reports z500 forecast power relative to ERA5 over zonal wavenumbers~4--9. Panel (c) shows the day-5 spatial MSE gap, where red favors WaveGSSM. The advantage grows with wave activity, while WaveGSSM preserves more power over the synoptic band.}
\label{fig:analysis}
\end{figure}

\paragraph{The gain grows when propagation matters.}
We stratify the z500 anomaly-correlation advantage by wave activity (\Cref{fig:analysis}a). The advantage increases from low- to high-activity cases and is larger at day~7 than at day~5 in the most active regime. At day~7, using ($1-\mathrm{AC}$) as pattern error, WaveGSSM removes $18\%$ of the error remaining in GraphCast$^{*}$ for the most wave-active third of forecasts, compared with $9\%$ for the least active third. The day-5 spatial error map further shows that the improvement is geographically broad rather than confined to a small number of regions (\Cref{fig:analysis}c). The supporting seasonal and regional breakdowns show larger gains during boreal winter and spring and across the extratropics (Appendix~\Cref{fig:analysis_appendix}). These results emphasize that the advantage becomes stronger when large-scale spatial propagation plays a larger role.

\paragraph{Large-scale spatial variability is better preserved.}
A deterministic forecast can reduce the squared error by smoothing spatial variability toward the mean. Hence, we examine the power of z500 over zonal wavenumbers~4--9, which capture large-scale spatial patterns associated with evolving weather systems (\Cref{fig:analysis}b). At day~7, WaveGSSM retains $68\%$ of the ERA5 variability in this band, compared with $59\%$ for GraphCast$^{*}$ and $37\%$ for GraphSSM. Its day-7 spectrum also remains closest to ERA5 across the selected band (Appendix~\Cref{fig:analysis_appendix}c). Persistence retains much of the initial spatial variability, but forecasts poorly as the atmosphere evolves. Preserving variability alone is therefore insufficient. The corresponding patterns must also evolve toward the correct future states.

These results show that WaveGSSM's advantage grows in regimes where spatial propagation matters and is not explained by stronger spatial smoothing.
The retrained ablations in \Cref{sec:velocity_ablation} test the role of motion-state carry directly, while Appendix~\ref{app:weather_velocity_ablation} examines how the trained weather model uses its internal motion-state pathway.
Additional spatial, vertical, bias, and case-level analyses are provided in Appendices~\ref{app:additional_era5_diagnostics} and~\ref{app:additional_forecast_analysis}.

\section{Conclusion}
\label{sec:conclusion}

We studied spatio-temporal graph prediction from the perspective of state representation, showing that the present field alone can be insufficient when future evolution depends on recent motion. The new WaveGSSM addresses this limitation with coupled pattern and motion states that evolve through a second-order graph-wave transition, allowing spatial propagation and temporal evolution to occur within the same state update. Across temporal-graph benchmarks and daily global weather forecasting, WaveGSSM consistently improves the predictive performance, with particularly strong gains in maintaining large-scale pattern agreement over longer rollouts. These results support explicit motion representation as a useful state-space principle for learning propagating dynamics on graphs.

\subsection*{AI use statement}

Generative AI tools were used to assist with language editing of the manuscript, identifying potentially relevant related work, checking theoretical analyses and mathematical derivations, providing suggestions on figure presentation such as color choices, and assisting the first author with coding. All figures were produced from our own data, and AI tools were not used to generate scientific data or figures. Relevant literature identified with AI assistance was independently verified, and all AI-assisted theoretical content, derivations, and code were checked by the authors. The authors take full responsibility for the final content of this work.

\subsection*{Reproducibility statement}

The appendix provides the information needed to reproduce our experiments. Model architecture and implementation details are given in Appendix~\ref{app:method_details}, while dataset preprocessing, training, and evaluation protocols are provided in Appendices~\ref{app:nodeclf} and~\ref{app:setup}. Ablation procedures are detailed in Appendices~\ref{app:nodeclf_velocity_ablation}, \ref{app:weather_retrained_no_carry}, and~\ref{app:weather_velocity_ablation}. Theoretical derivations and stability analysis are provided in Appendix~\ref{app:theory}.

\bibliography{iclr2027_conference}
\bibliographystyle{iclr2027_conference}

\clearpage
\appendix

\section{Theoretical Analysis}
\label{app:theory}

This section provides the theoretical details supporting WaveGSSM.
We first prove that the present graph field alone does not determine the future of an oscillatory graph mode when its recent motion is unknown.
We then derive the semi-implicit transition used by WaveGSSM and analyze its modal stability.

\subsection{Why the Pattern State Alone Is Not a Complete State}
\label{app:proof_velocity_phase}

\begin{proof}[Proof of Observation~\ref{thm:velocity_phase}]
Since $\m L$ is symmetric, let
$(\lambda_i,\m q_i)$
be an eigenpair satisfying
\[
\m L\m q_i=\lambda_i\m q_i.
\]

Define the projected modal amplitude
\[
a_i(\tau)=\m q_i^\top\m u(\tau).
\]
Projecting the graph-wave equation onto this mode gives
\begin{align}
0
&=
\m q_i^\top
\left(
\ddot{\m u}(\tau)
+
\gamma\dot{\m u}(\tau)
+
\kappa\m L\m u(\tau)
\right)
\\
&=
\m q_i^\top\ddot{\m u}(\tau)
+
\gamma\m q_i^\top\dot{\m u}(\tau)
+
\kappa\m q_i^\top\m L\m u(\tau)
\\
&=
\ddot a_i(\tau)
+
\gamma\dot a_i(\tau)
+
\kappa(\m L\m q_i)^\top\m u(\tau)
\\
&=
\ddot a_i(\tau)
+
\gamma\dot a_i(\tau)
+
\kappa\lambda_i\m q_i^\top\m u(\tau)
\\
&=
\ddot a_i(\tau)
+
\gamma\dot a_i(\tau)
+
\kappa\lambda_i a_i(\tau).
\label{eq:proof_modal_ode}
\end{align}
Thus, each graph mode follows a damped oscillator. Set
\[
a_i(\tau)=e^{-\gamma \tau/2}b_i(\tau).
\]
Then
\begin{align}
\dot a_i(\tau)
&=
e^{-\gamma \tau/2}
\left(
\dot b_i(\tau)-\frac{\gamma}{2}b_i(\tau)
\right),
\\
\ddot a_i(\tau)
&=
e^{-\gamma \tau/2}
\left(
\ddot b_i(\tau)
-
\gamma\dot b_i(\tau)
+
\frac{\gamma^2}{4}b_i(\tau)
\right).
\end{align}
Substituting these expressions into Eq.~(\ref{eq:proof_modal_ode}) yields
\begin{align}
0
&=
\ddot a_i(\tau)
+
\gamma\dot a_i(\tau)
+
\kappa\lambda_i a_i(\tau)
\\
&=
e^{-\gamma \tau/2}
\left(
\ddot b_i(\tau)
-
\gamma\dot b_i(\tau)
+
\frac{\gamma^2}{4}b_i(\tau)
\right)
\\
&\quad
+
\gamma e^{-\gamma \tau/2}
\left(
\dot b_i(\tau)
-
\frac{\gamma}{2}b_i(\tau)
\right)
+
\kappa\lambda_i e^{-\gamma \tau/2}b_i(\tau)
\\
&=
e^{-\gamma \tau/2}
\left[
\ddot b_i(\tau)
-
\gamma\dot b_i(\tau)
+
\frac{\gamma^2}{4}b_i(\tau)
+
\gamma\dot b_i(\tau)
-
\frac{\gamma^2}{2}b_i(\tau)
+
\kappa\lambda_i b_i(\tau)
\right]
\\
&=
e^{-\gamma \tau/2}
\left[
\ddot b_i(\tau)
+
\left(
\kappa\lambda_i-\frac{\gamma^2}{4}
\right)b_i(\tau)
\right]
\\
&=
e^{-\gamma \tau/2}
\left[
\ddot b_i(\tau)+\omega_i^2 b_i(\tau)
\right].
\label{eq:proof_b_ode}
\end{align}
Because
$e^{-\gamma \tau/2}>0$,
Eq.~(\ref{eq:proof_b_ode}) implies
\[
\ddot b_i(\tau)+\omega_i^2 b_i(\tau)=0.
\]
Therefore,
\begin{align}
b_i(\tau)
&=
c_1\cos(\omega_i \tau)
+
c_2\sin(\omega_i \tau),
\\
a_i(\tau)
&=
e^{-\gamma \tau/2}
\left[
c_1\cos(\omega_i \tau)
+
c_2\sin(\omega_i \tau)
\right].
\label{eq:proof_general_solution}
\end{align}
The constants are determined by the initial displacement and velocity:
\begin{align}
a_i(0)
&=
c_1,
\\
\dot a_i(0)
&=
\dot b_i(0)
-
\frac{\gamma}{2}b_i(0)
\\
&=
\omega_i c_2
-
\frac{\gamma}{2}c_1
\\
&=
\omega_i c_2
-
\frac{\gamma}{2}a_i(0),
\\
c_2
&=
\frac{
\dot a_i(0)
+
\frac{\gamma}{2}a_i(0)
}{
\omega_i
}.
\label{eq:proof_c2}
\end{align}
Substituting Eq.~(\ref{eq:proof_c2}) into
Eq.~(\ref{eq:proof_general_solution})
gives Eq.~(\ref{eq:modal_phase_solution}).

It remains to show that equal displacement does not imply equal future evolution. Consider two trajectories satisfying
\[
\m u^{(1)}(0)=\m u^{(2)}(0),
\qquad
\m q_i^\top\dot{\m u}^{(1)}(0)
\ne
\m q_i^\top\dot{\m u}^{(2)}(0).
\]
Using Eq.~(\ref{eq:modal_phase_solution}), their modal difference is
\begin{align}
a_i^{(1)}(\tau)-a_i^{(2)}(\tau)
&=
e^{-\gamma \tau/2}
\left[
\left(
a_i^{(1)}(0)-a_i^{(2)}(0)
\right)
\cos(\omega_i \tau)
\right]
\\
&\quad
+
e^{-\gamma \tau/2}
\left[
\frac{
\dot a_i^{(1)}(0)-\dot a_i^{(2)}(0)
+
\frac{\gamma}{2}
\left(
a_i^{(1)}(0)-a_i^{(2)}(0)
\right)
}{
\omega_i
}
\sin(\omega_i \tau)
\right]
\\
&=
e^{-\gamma \tau/2}
\frac{
\dot a_i^{(1)}(0)-\dot a_i^{(2)}(0)
}{
\omega_i
}
\sin(\omega_i \tau)
\\
&=
e^{-\gamma \tau/2}
\frac{
\m q_i^\top\dot{\m u}^{(1)}(0)
-
\m q_i^\top\dot{\m u}^{(2)}(0)
}{
\omega_i
}
\sin(\omega_i \tau).
\label{eq:proof_phase_gap}
\end{align}
For any time $\tau$ satisfying
$\sin(\omega_i \tau)\ne0$,
the right-hand side is nonzero. Two graph fields can therefore share the same current displacement and still evolve into different future states. A displacement-only representation is not a complete Markov state for the oscillatory graph-wave dynamics considered here.
\end{proof}

Generally, the theorem does not claim that a recurrent model cannot estimate motion from an observed history. It shows that a state intended to determine the future of a second-order propagating process must contain information about both the present field and its recent change. WaveGSSM represents these components explicitly through the pattern and motion states
$(\m U_t,\m V_t)$,
as examined empirically in
\Cref{fig_intro_rollout}.

\subsection{Derivation of the Semi-Implicit Rollout}
\label{app:discretization}

We derive
Eqs.~(\ref{eq:update_v})--(\ref{eq:update_u})
from the first-order system in
\Cref{sec:wave}.
At rollout step
$t$,
the graph interaction and forcing are evaluated explicitly using
$\m U_{t-1}^{(\ell)}$,
$\m L_t^{(\ell)}$,
and
$\m F_t^{(\ell)}$,
while the damping term is evaluated implicitly at
$\m V_t^{(\ell)}$.
The semi-implicit Euler update therefore satisfies
\begin{align}
\frac{
\m V_t^{(\ell)}
-
\m V_{t-1}^{(\ell)}
}{
\dt^{(\ell)}
}
&=
-
\m V_t^{(\ell)}
\mathbf{\Gamma}^{(\ell)}
-
\m L_t^{(\ell)}
\m U_{t-1}^{(\ell)}
\m K^{(\ell)}
+
\m F_t^{(\ell)},
\label{eq:app_discrete_v}
\\
\frac{
\m U_t^{(\ell)}
-
\m U_{t-1}^{(\ell)}
}{
\dt^{(\ell)}
}
&=
\m V_t^{(\ell)}.
\label{eq:app_discrete_u}
\end{align}

Rearranging
Eq.~(\ref{eq:app_discrete_v})
gives
\begin{align}
\m V_t^{(\ell)}
\left(
\I_d
+
\dt^{(\ell)}
\mathbf{\Gamma}^{(\ell)}
\right)
&=
\m V_{t-1}^{(\ell)}
+
\dt^{(\ell)}
\left(
-
\m L_t^{(\ell)}
\m U_{t-1}^{(\ell)}
\m K^{(\ell)}
+
\m F_t^{(\ell)}
\right),
\\
\m V_t^{(\ell)}
&=
\left[
\m V_{t-1}^{(\ell)}
+
\dt^{(\ell)}
\left(
-
\m L_t^{(\ell)}
\m U_{t-1}^{(\ell)}
\m K^{(\ell)}
+
\m F_t^{(\ell)}
\right)
\right]
\left(
\I_d
+
\dt^{(\ell)}
\mathbf{\Gamma}^{(\ell)}
\right)^{-1}.
\end{align}
Similarly,
Eq.~(\ref{eq:app_discrete_u})
gives
\begin{equation}
\m U_t^{(\ell)}
=
\m U_{t-1}^{(\ell)}
+
\dt^{(\ell)}
\m V_t^{(\ell)}.
\end{equation}
These are
Eqs.~(\ref{eq:update_v}) and~(\ref{eq:update_u})
in the main text.

\subsection{Modal Stability of the Semi-Implicit Rollout}
\label{app:stability}

We next analyze the numerical behavior of the semi-implicit rollout under a fixed graph operator. The analysis isolates the integrator by removing the forcing term and holding the graph operator constant. Since
$\m K$
and
$\mathbf{\Gamma}$
are positive diagonal matrices, each graph eigenmode and latent channel can be studied independently. We fix one WaveGSSM block and suppress the block index $\ell$ for clarity.

\begin{proposition}[Modal stability of the semi-implicit rollout]
\label{prop:semi_implicit_stability}
Let
$\m L\succeq0$
be fixed,
\[
\m K=\diag(\kappa_1,\ldots,\kappa_d),
\qquad
\kappa_r>0,
\]
and
\[
\mathbf{\Gamma}
=
\diag(\gamma_1,\ldots,\gamma_d),
\qquad
\gamma_r\geq0.
\]
For a graph eigenvalue
$\lambda_i>0$
and latent channel $r$, define
\[
\omega_{ir}^{2}
=
\kappa_r\lambda_i,
\]
and let
$\dt>0$
denote the step size. The unforced semi-implicit rollout has the modal transition
\begin{equation}
\begin{bmatrix}
a_t\\
b_t
\end{bmatrix}
=
\m A_{ir}
\begin{bmatrix}
a_{t-1}\\
b_{t-1}
\end{bmatrix},
\qquad
\m A_{ir}
=
\frac{1}{1+\dt\gamma_r}
\begin{bmatrix}
1+\dt\gamma_r-(\dt)^2\omega_{ir}^{2} & \dt\\
-\dt\omega_{ir}^{2} & 1
\end{bmatrix}.
\label{eq:modal_transition_matrix}
\end{equation}
If
$\gamma_r=0$
and
$0<\dt\omega_{ir}<2$,
then
\[
\rho(\m A_{ir})=1.
\]
If
$\gamma_r>0$
and
\[
\dt^2\omega_{ir}^{2}<4+2\dt\gamma_r,
\]
then
\[
\rho(\m A_{ir})<1.
\]
In contrast, undamped explicit Euler satisfies
\[
\rho(\m A^{\mathrm{EE}}_{ir})
=
\sqrt{1+(\dt)^2\omega_{ir}^{2}}
>
1
\]
for every $\dt>0$.
\end{proposition}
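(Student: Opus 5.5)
First I derive the modal transition matrix. Because $\m L$ is symmetric positive semidefinite, I diagonalize it as $\m L=\m Q\boldsymbol\Lambda\m Q^\top$ with orthonormal $\m Q$. The matrices $\m K$ and $\mathbf{\Gamma}$ act by right multiplication and are diagonal. Projecting the unforced updates Eqs.~(\ref{eq:update_v})--(\ref{eq:update_u}) onto $\m q_i$ and column $r$ therefore decouples them into scalar recursions. Writing $a_t=\m q_i^\top\m U_t\m e_r$ and $b_t=\m q_i^\top\m V_t\m e_r$, the motion update becomes
\[
b_t=\frac{b_{t-1}-\dt\,\omega_{ir}^2a_{t-1}}{1+\dt\gamma_r}.
\]
Substituting this into $a_t=a_{t-1}+\dt\, b_t$ gives
\[
a_t=\frac{(1+\dt\gamma_r-\dt^2\omega_{ir}^2)a_{t-1}+\dt\, b_{t-1}}{1+\dt\gamma_r},
\]
which is exactly Eq.~(\ref{eq:modal_transition_matrix}).

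Next I compute the trace and determinant of $\m A_{ir}$. Write $c=1+\dt\gamma_r$ and $h=\dt^2\omega_{ir}^2>0$. The determinant is $\det\m A_{ir}=c^{-2}\big[(c-h)+h\big]=1/c$, and the trace is $\operatorname{tr}\m A_{ir}=(c+1-h)/c$. The eigenvalues are the roots of $\mu^2-\operatorname{tr}\,\mu+\det=0$.

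For the undamped case $\gamma_r=0$, we have $c=1$, so $\det=1$ and $\operatorname{tr}=2-h$. The condition $0<\dt\omega_{ir}<2$ gives $0<h<4$, hence $|\operatorname{tr}|<2$. The discriminant $\operatorname{tr}^2-4$ is then negative, so the eigenvalues are complex conjugates with $|\mu|^2=\det=1$, and $\rho=1$.

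For the damped case $\gamma_r>0$, I use the Schur--Cohn (Jury) criterion for real quadratics: both roots lie strictly inside the unit disk if and only if $|\det|<1$ and $|\operatorname{tr}|<1+\det$. The first condition holds because $\det=1/c<1$. Multiplying the second by $c>0$ turns it into $|c+1-h|<c+1$, which is equivalent to $0<h<2(c+1)=4+2\dt\gamma_r$. This is precisely the stated hypothesis $\dt^2\omega_{ir}^2<4+2\dt\gamma_r$, so $\rho(\m A_{ir})<1$.

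The main obstacle is that the damped case can produce either real or complex eigenvalues. The Jury criterion avoids splitting into those two cases, so I would state it as a standard lemma; alternatively, I would check both cases directly. In the complex case, $|\mu|^2=1/c<1$. In the real case, I evaluate the characteristic polynomial at $\pm1$: $p(1)=h/c>0$ and $p(-1)=(2c+2-h)/c>0$, which holds by the hypothesis. Together with the fact that the product of the roots is $1/c<1$, this places both real roots in $(-1,1)$.

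Finally, for explicit Euler with $\gamma=0$, both updates use the previous state: $b_t=b_{t-1}-\dt\,\omega^2a_{t-1}$ and $a_t=a_{t-1}+\dt\, b_{t-1}$. This gives
\[
\m A^{\mathrm{EE}}=\begin{bmatrix}1&\dt\\-\dt\,\omega^2&1\end{bmatrix},
\]
whose eigenvalues are $1\pm i\,\dt\,\omega$. Both have modulus $\sqrt{1+\dt^2\omega^2}>1$ for every $\dt>0$.
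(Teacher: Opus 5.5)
Your proposal is correct and follows the paper's proof essentially step for step: the same modal projection onto $(\lambda_i,\m q_i)$ and channel $r$, the same trace and determinant computation, the second-order Jury conditions for $\gamma_r>0$ (your condition $|\operatorname{tr}\m A_{ir}|<1+\det\m A_{ir}$ is the paper's pair $1\mp\theta_{ir}+\delta_{ir}>0$), the complex-conjugate pair with unit determinant when $\gamma_r=0$, and the eigenvalues $1\pm\mathrm{i}\,\dt\,\omega_{ir}$ for explicit Euler. Your optional real/complex case split, using $p(\pm1)>0$ together with the product of the roots being $1/(1+\dt\gamma_r)<1$, is a valid self-contained replacement for citing the Jury criterion.
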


\begin{proof}
Let
$\m q_i$
be a normalized eigenvector satisfying
\[
\m L\m q_i=\lambda_i\m q_i,
\]
and let
$\m u_{r,t}$
and
$\m v_{r,t}$
denote the $r$-th columns of the pattern and motion states. Define
\[
a_t=\m q_i^\top\m u_{r,t},
\qquad
b_t=\m q_i^\top\m v_{r,t}.
\]
Projecting the unforced update in
Eqs.~(\ref{eq:update_v}) and (\ref{eq:update_u})
onto graph mode $i$ and latent channel $r$ gives
\begin{align}
b_t
&=
\frac{
b_{t-1}
-
h\kappa_r\lambda_i a_{t-1}
}{
1+h\gamma_r
}
\\
&=
\frac{
b_{t-1}
-
h\omega_{ir}^{2}a_{t-1}
}{
1+h\gamma_r
},
\label{eq:modal_velocity_update}
\\
a_t
&=
a_{t-1}
+
hb_t
\\
&=
a_{t-1}
+
h
\frac{
b_{t-1}
-
h\omega_{ir}^{2}a_{t-1}
}{
1+h\gamma_r
}
\\
&=
\frac{
\left(
1+h\gamma_r-h^2\omega_{ir}^{2}
\right)a_{t-1}
+
hb_{t-1}
}{
1+h\gamma_r
}.
\label{eq:modal_displacement_update}
\end{align}
Therefore,
\begin{align}
\begin{bmatrix}
a_t\\
b_t
\end{bmatrix}
&=
\frac{1}{1+\dt\gamma_r}
\begin{bmatrix}
1+h\gamma_r-h^2\omega_{ir}^{2} & h\\
-h\omega_{ir}^{2} & 1
\end{bmatrix}
\begin{bmatrix}
a_{t-1}\\
b_{t-1}
\end{bmatrix}
\\
&=
\m A_{ir}
\begin{bmatrix}
a_{t-1}\\
b_{t-1}
\end{bmatrix}.
\end{align}

The determinant and trace of
$\m A_{ir}$
are
\begin{align}
\det(\m A_{ir})
&=
\frac{
1+h\gamma_r-h^2\omega_{ir}^{2}
+
h^2\omega_{ir}^{2}
}{
(1+h\gamma_r)^2
}
\\
&=
\frac{1}{1+\dt\gamma_r},
\label{eq:modal_determinant}
\\
\operatorname{tr}(\m A_{ir})
&=
\frac{
2+h\gamma_r-h^2\omega_{ir}^{2}
}{
1+h\gamma_r
}.
\label{eq:modal_trace}
\end{align}
Let
\[
\delta_{ir}
=
\det(\m A_{ir}),
\qquad
\theta_{ir}
=
\operatorname{tr}(\m A_{ir}).
\]
The characteristic polynomial is
\begin{equation}
p_{ir}(z)
=
z^2-\theta_{ir}z+\delta_{ir}.
\label{eq:modal_characteristic}
\end{equation}

For
$\gamma_r>0$,
the second-order Jury conditions satisfy
\begin{align}
1-\delta_{ir}
&=
1-\frac{1}{1+\dt\gamma_r}
\\
&=
\frac{
h\gamma_r
}{
1+h\gamma_r
}
>0,
\label{eq:jury_first}
\\
1-\theta_{ir}+\delta_{ir}
&=
1
-
\frac{
2+h\gamma_r-h^2\omega_{ir}^{2}
}{
1+h\gamma_r
}
+
\frac{1}{1+\dt\gamma_r}
\\
&=
\frac{
h^2\omega_{ir}^{2}
}{
1+h\gamma_r
}
>0,
\label{eq:jury_second}
\\
1+\theta_{ir}+\delta_{ir}
&=
1
+
\frac{
2+h\gamma_r-h^2\omega_{ir}^{2}
}{
1+h\gamma_r
}
+
\frac{1}{1+\dt\gamma_r}
\\
&=
\frac{
4+2h\gamma_r-h^2\omega_{ir}^{2}
}{
1+h\gamma_r
}.
\label{eq:jury_third}
\end{align}
If
\[
\dt^2\omega_{ir}^{2}<4+2\dt\gamma_r,
\]
all three quantities are positive. Both roots of
Eq.~(\ref{eq:modal_characteristic})
therefore lie strictly inside the unit disk:
\[
\rho(\m A_{ir})<1.
\]

For
$\gamma_r=0$,
Eqs.~(\ref{eq:modal_determinant}) and~(\ref{eq:modal_trace})
reduce to
\begin{align}
\det(\m A_{ir})
&=
1,
\\
\operatorname{tr}(\m A_{ir})
&=
2-h^2\omega_{ir}^{2}.
\end{align}
Under
$0<h\omega_{ir}<2$,
\[
-2
<
2-h^2\omega_{ir}^{2}
<
2.
\]
The eigenvalues are therefore a complex-conjugate pair with product one, which gives
\[
\rho(\m A_{ir})=1.
\]

For comparison, undamped explicit Euler gives
\begin{align}
\begin{bmatrix}
a_t\\
b_t
\end{bmatrix}
&=
\underbrace{
\begin{bmatrix}
1 & h\\
-h\omega_{ir}^{2} & 1
\end{bmatrix}
}_{\m A^{\mathrm{EE}}_{ir}}
\begin{bmatrix}
a_{t-1}\\
b_{t-1}
\end{bmatrix},
\\
\lambda_{\pm}
\left(
\m A^{\mathrm{EE}}_{ir}
\right)
&=
1
\pm
\mathrm{i}h\omega_{ir},
\\
\left|
\lambda_{\pm}
\left(
\m A^{\mathrm{EE}}_{ir}
\right)
\right|
&=
\sqrt{
1+h^2\omega_{ir}^{2}
}
>
1.
\end{align}
Thus, explicit Euler amplifies every nonzero undamped oscillatory mode, whereas the semi-implicit rollout admits a non-amplifying conservative regime and becomes contractive under positive damping and the stated step-size condition.
\end{proof}

For a frozen graph operator, a sufficient condition covering all nonconstant graph modes and latent channels is
\begin{equation}
(\dt)^2
\kappa_{\max}
\lambda_{\max}(\m L)
<
4+2\dt\gamma_{\min}.
\label{eq:frozen_global_stability}
\end{equation}
where
\[
\kappa_{\max}
=
\max_r\kappa_r,
\qquad
\gamma_{\min}
=
\min_r\gamma_r.
\]
If
$\gamma_{\min}>0$,
this condition makes every nonconstant frozen mode contractive. If some channels have zero damping, the corresponding conservative modes are non-amplifying under
\[
(\dt)^2
\kappa_{\max}
\lambda_{\max}(\m L)
<
4.
\]

This result concerns the graph-wave integrator under a fixed graph operator and without forcing, rather than the full nonlinear forecasting network. It shows that the proposed semi-implicit discretization avoids the systematic amplification of explicit Euler and admits a stable oscillatory regime for the graph-wave component.

\section{WaveGSSM Architecture and Implementation Details}
\label{app:method_details}

This section first summarizes the complete WaveGSSM forward pass and then provides the implementation details for the graph operator, input and forcing maps, wave-state rollout, and task-specific readout.

\subsection{Forward Pass Overview}
\label{app:forward_pass}

\Cref{alg:wavegssm}
summarizes how the components introduced in
\Cref{sec:method}
form a complete forward pass.
Each input snapshot is first mapped to a hidden representation.
Within each WaveGSSM block, the first hidden frames initialize the pattern and motion states.
These states are then advanced sequentially over the observed graph snapshots, after which the evolved pattern state updates the hidden representation passed to the next block.

\begin{algorithm}[t]
\caption{WaveGSSM forward pass}
\label{alg:wavegssm}
\begin{algorithmic}[1]
\Require Observed temporal graph sequence
$\{(G_t,\m X_t)\}_{t=1}^{T}$

\State Encode each snapshot
$\m Z_t^{(0)} \gets \phi_{\mathrm{in}}(\m X_t)$

\For{$\ell=1,\ldots,L$}

    \State Initialize pattern state
    $\m U_0^{(\ell)}$
    from
    $\m Z_1^{(\ell-1)}$

    \State Initialize motion state
    $\m V_0^{(\ell)}$
    from the difference between the first hidden frames

    \For{$t=1,\ldots,T$}

        \State Obtain graph operator
        $\m L_t^{(\ell)}$
        and forcing
        $\m F_t^{(\ell)}$

        \State Update motion state
        $\m V_t^{(\ell)}$
        using Eq.~(\ref{eq:update_v})

        \State Update pattern state
        $\m U_t^{(\ell)}$
        using Eq.~(\ref{eq:update_u})

        \State Update hidden representation
        $\m Z_t^{(\ell)}$
        using
        Eqs.~(\ref{eq:block_residual_1})--(\ref{eq:block_residual_2})

    \EndFor
\EndFor

\State Form the terminal representation from
$\m Z_T^{(L)}$
or
$
[\m U_T^{(L)},\m V_T^{(L)}]
$

\State \Return Task-specific prediction

\end{algorithmic}
\end{algorithm}

The inner loop is the motion-carrying graph rollout across the observed sequence, while the outer loop stacks multiple WaveGSSM blocks.
The motion state is carried between consecutive transitions within a block, whereas the updated hidden sequence is passed between blocks. The following subsections specify each operation in detail.

\subsection{Support Graph Construction}
\label{app:support}

The support graph
$G_{\star}=(\mathcal V,E_{\star})$
defines the candidate edges for latent interaction.
For each snapshot, we convert the raw edge set into a simple undirected graph by removing self-loops, ignoring edge direction, and merging duplicated edges.

For an evolving graph, we keep edges that occur in at least
$k_{\mathrm{sup}}$
snapshots
\begin{equation}
E_{\star}
=
\left\{
e\in\bigcup_{t=1}^{T}E_t
\ \middle|\
\sum_{t=1}^{T}
\mathbf{1}[e\in E_t]
\geq k_{\mathrm{sup}}
\right\}.
\label{eq:support}
\end{equation}
Here,
$k_{\mathrm{sup}}$
is a dataset-specific preprocessing threshold.
Setting
$k_{\mathrm{sup}}=1$
gives the union of all observed edge sets, while larger values remove rarely observed edges.

The support graph is constructed once and shared across WaveGSSM blocks.
Under the transductive node-classification protocol, the complete structural sequence is observed during training and inference, while node labels are not used to construct the support graph.

After assigning an arbitrary orientation to each support edge, we construct the incidence matrix
$\m B_{\star}\in\R^{|E_{\star}|\times N}$.
At snapshot $t$,
$\m a_t\in\{0,1\}^{|E_{\star}|}$
indicates which support edges are active.
Block $\ell$ learns positive base edge strengths
$\bar{\m w}^{(\ell)}\in\R^{|E_{\star}|}$
and forms the weighted graph Laplacian
$\m L_t^{(\ell)}$
defined in
Eq.~(\ref{eq:Lt}).
The support topology is shared across blocks, while the learned edge strengths remain block-specific.

For fixed-mesh tasks, no support-frequency threshold is applied.
The prescribed mesh directly defines
$G_{\star}$,
and all mesh edges remain available throughout the rollout.
This is the setting used for ERA5 forecasting.

\subsection{Input and Forcing Maps}
\label{app:init_force}

\paragraph{Input projection.}
Each observed snapshot is projected to hidden width $d$ as
\begin{equation}
\m Z_t^{(0)}
=
\phi_{\mathrm{in}}(\m X_t)
=
\Drop
\left(
\LN
\left(
\GELU
\left(
\m X_t\m W_{\mathrm{in}}
+
\1_N\m b_{\mathrm{in}}^\top
\right)
\right)
\right)
\in
\R^{N\times d},
\end{equation}
where
$\m W_{\mathrm{in}}\in\R^{F_{\mathrm{in}}\times d}$
and
$\m b_{\mathrm{in}}\in\R^d$.
For block $\ell>1$, the input is the hidden sequence
$\{\m Z_t^{(\ell-1)}\}_{t=1}^{T}$
produced by the preceding block.

\paragraph{Gated linear map.}
The forcing and initialization modules use the gated linear map
\begin{equation}
\GL(\m Z)
=
\left(
\m Z\m W^{(v)}
+
\1_N(\m b^{(v)})^\top
\right)
\odot
\sigma
\left(
\m Z\m W^{(g)}
+
\1_N(\m b^{(g)})^\top
\right).
\label{eq:GL}
\end{equation}
For
$\m Z\in\R^{N\times d_{\mathrm{in}}}$,
the weights satisfy
$\m W^{(v)},\m W^{(g)}
\in\R^{d_{\mathrm{in}}\times d_{\mathrm{out}}}$
and the biases satisfy
$\m b^{(v)},\m b^{(g)}
\in\R^{d_{\mathrm{out}}}$.
Each occurrence of
$\GL$
uses independent parameters.

\paragraph{Forcing.}
The node-wise forcing map
$g^{(\ell)}$
in
Eq.~(\ref{eq:forcing_block})
is implemented as
\begin{equation}
\m F_t^{(\ell)}
=
\GL_{\mathrm{force}}^{(\ell)}
\left(
\LN
\left(
\m Z_t^{(\ell-1)}
\right)
\right)
\in
\R^{N\times d}.
\end{equation}

For ERA5 forecasting, the forcing additionally uses the hidden temporal difference
\begin{equation}
\Delta\m Z_t^{(\ell-1)}
=
\begin{cases}
\Zero_{N\times d}, & t=1,\\[1mm]
\LN
\left(
\m Z_t^{(\ell-1)}
\right)
-
\LN
\left(
\m Z_{t-1}^{(\ell-1)}
\right), & t>1,
\end{cases}
\end{equation}
through
\begin{equation}
\m F_t^{(\ell)}
\leftarrow
\m F_t^{(\ell)}
+
\GL_{\mathrm{imp}}^{(\ell)}
\left(
\Delta\m Z_t^{(\ell-1)}
\right).
\end{equation}
Temporal node classification uses the base forcing map.

\subsection{Wave-State Initialization and Rollout}
\label{app:rollout_details}

\paragraph{Initial state.}
Each WaveGSSM block initializes its wave state from the incoming hidden sequence.
The node-wise initialization map
$\psi_U^{(\ell)}$
in the main text is implemented by
$\GL_{\mathrm{init},u}^{(\ell)}$,
giving
\begin{equation}
\m U_0^{(\ell)}
=
\psi_U^{(\ell)}
\left(
\LN
\left(
\m Z_1^{(\ell-1)}
\right)
\right)
=
\GL_{\mathrm{init},u}^{(\ell)}
\left(
\LN
\left(
\m Z_1^{(\ell-1)}
\right)
\right)
\in
\R^{N\times d}.
\end{equation}
When at least two input snapshots are available, the initial motion state is obtained from their hidden difference
\begin{align}
\m Z_{\mathrm{vel}}^{(\ell)}
&=
\LN
\left(
\m Z_2^{(\ell-1)}
\right)
-
\LN
\left(
\m Z_1^{(\ell-1)}
\right),
\\
\m V_0^{(\ell)}
&=
\psi_V^{(\ell)}
\left(
\m Z_{\mathrm{vel}}^{(\ell)}
\right)
=
\GL_{\mathrm{init},v}^{(\ell)}
\left(
\m Z_{\mathrm{vel}}^{(\ell)}
\right)
\in
\R^{N\times d}.
\end{align}
For a single-frame input,
$\m V_0^{(\ell)}=\Zero_{N\times d}$.
The pattern-state and motion-state initialization maps use independent parameters.

\paragraph{Within-block rollout.}
Starting from
$(\m U_0^{(\ell)},\m V_0^{(\ell)})$,
the block processes
$\{\m L_t^{(\ell)}\}_{t=1}^{T}$
and
$\{\m F_t^{(\ell)}\}_{t=1}^{T}$
using the updates in
Eqs.~(\ref{eq:update_v})
and
(\ref{eq:update_u}).
At each transition, the updated motion state advances the pattern state and is passed directly to the following transition.
The motion state therefore acts as a recurrent state within the block.

The resulting pattern-state trajectory is injected into the hidden stream through
Eqs.~(\ref{eq:block_residual_1})
and
(\ref{eq:block_residual_2}).
The motion state remains internal to the block.
Across network depth, the updated hidden sequence
$\{\m Z_t^{(\ell)}\}_{t=1}^{T}$
is passed to the next block, which initializes and evolves its own wave state.

\paragraph{Autoregressive weather rollout.}
For weather forecasting, the explicit pattern and motion states persist only within each graph-wave processor call.
Across forecast days, predicted prognostic fields form the rolling input history for the next call together with known forcing and static features.
Each new call reconstructs its wave state from this history rather than receiving an explicit
$(\m U,\m V)$
pair from the preceding forecast day.
No future ERA5 target is used during evaluation.

\subsection{Task-Specific Readouts}
\label{app:readout_details}

The final WaveGSSM block provides the terminal hidden representation
$\m Z_T^{(L)}$
and the state readout
$\m H$
defined in
Eq.~(\ref{eq:readout}).
The implementation supports either the
\texttt{state}
readout
$\m H$
or the
\texttt{final}
readout
$\m Z_T^{(L)}$.

For temporal node classification, the selected readout for each dataset is listed in
\Cref{tab:nodeclf_hparams}.
The selected representation is passed through a learned projection followed by a linear node classifier.

ERA5 forecasting uses the
\texttt{state}
readout.
A learned projection maps
$\m H$
to the mesh-decoder width before the mesh-to-grid decoder predicts the target fields.
The frozen-checkpoint readout intervention in
Appendix~\ref{app:weather_velocity_ablation}
scales only the terminal motion-state component
$\m V_T^{(L)}$
while leaving
$\m U_T^{(L)}$,
the decoder parameters, and all tensor dimensions unchanged.

\section{Paired Motion Diagnostic}
\label{app:intro_rollout}

The diagnostic in \Cref{fig_intro_rollout} isolates a basic property of propagating graph signals. Two trajectories are generated on the same irregular graph using the same symmetric propagation operator. They have identical fields at the final observed step but arrive there with opposite recent motion. Their future trajectories therefore differ even though the graph and present field are the same. This construction directly tests whether a model can distinguish futures that cannot be determined from the current snapshot alone.

\paragraph{Compared models.}
We compare a current-only GCN, GCN-LSTM, GCN-SSM, WaveGSSM without the motion state, and the full WaveGSSM. The temporal models receive the same three observed fields, while the current-only GCN receives only the shared final field. The WaveGSSM variant without the motion state retains the graph architecture and hidden width of the full model but removes the motion-state input and its recurrent propagation. All models are trained with the same free-running multi-step objective and have closely matched parameter counts. Training, validation, and testing use disjoint graphs, and each result is averaged over five random seeds.

\paragraph{Evaluation.}
The visualized example is selected using only properties of the ground-truth trajectories rather than model performance. Forecast position difference measures the separation between the predicted right-moving and left-moving patterns along the transport path. A value near zero indicates that the model produces nearly the same future for the two histories. Centroid error measures the distance between the predicted and true pattern locations. Panel~(h) in \Cref{fig_intro_rollout} directly compares the rollout errors of WaveGSSM with and without the motion state, while \Cref{fig:intro_rollout_aggregate} reports aggregate results over all held-out pairs and seeds.

\begin{figure}[h]
\centering
\includegraphics[width=\linewidth]{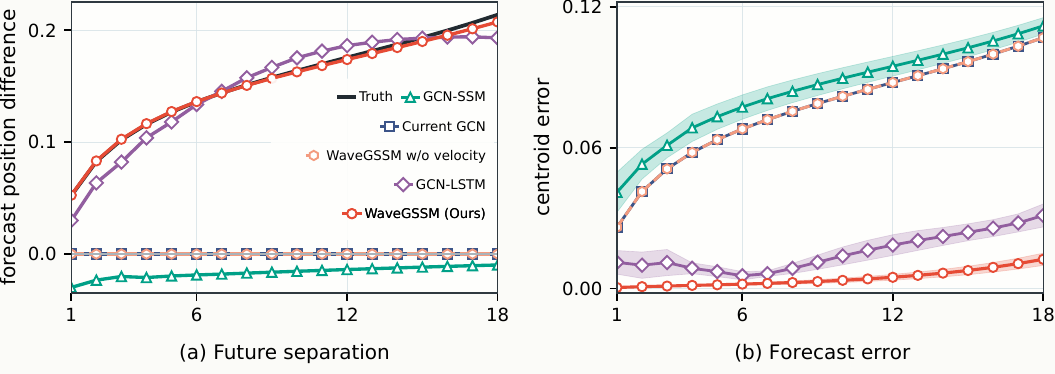}
\caption{Aggregate paired-motion diagnostic on the held-out graph. Panel (a) reports the signed separation between the predicted right-moving and left-moving pattern centres; the black curve is the ground-truth mean over 48 held-out trajectory pairs. Each model curve first averages over the 48 pairs within a training seed and then over five seeds. Panel (b) reports mean centroid error over the corresponding 96 directional trajectories.}
\label{fig:intro_rollout_aggregate}
\end{figure}

\section{Experimental Setup}
\label{app:experiments}

We evaluate WaveGSSM in two complementary settings. Temporal node classification examines whether the proposed state evolution is useful beyond explicitly physical forecasting, while global ERA5 forecasting evaluates the motivating setting in which coherent spatial patterns must be advanced over multiple prediction steps. This section gives the datasets, evaluation protocols, model configurations, and metrics used in both settings.

\subsection{Temporal Node Classification}
\label{app:nodeclf}

\paragraph{Task and datasets.}
We evaluate WaveGSSM on the four discrete-time temporal graph benchmarks adopted by GraphSSM
\citep{li2024graphssm}, including DBLP-3, Brain, Reddit, and DBLP-10.
Each dataset contains a sequence of graph snapshots on a shared node set together with node features and node-level class labels.
The task is transductive node classification after processing the observed graph sequence.

\Cref{tab:nodeclf_data} summarizes the dataset statistics.
The structural temporal-continuity score
$\mathrm{TC}_{\mathrm{s}}$
is the mean Jaccard overlap between consecutive edge sets, while
$\mathrm{TC}_{\mathrm{f}}$
is the mean cosine similarity between consecutive node-feature matrices
\citep{li2024graphssm}.
The datasets span different temporal regimes, from rapidly changing connectivity in Brain to substantially more persistent structure in DBLP-10.

\begin{table}[h]
\centering
\caption{Statistics of the temporal node-classification benchmarks \citep{li2024graphssm}. TC$_{\mathrm{s}}$ and TC$_{\mathrm{f}}$ measure temporal continuity of graph structure and node features. The final column gives the train, validation, and test proportions used for WaveGSSM.}
\label{tab:nodeclf_data}
\small
\begin{tabular}{lrrrrrccl}
\toprule
Dataset & \#Nodes & \#Edges & \#Feat. & \#Cls. & \#Steps & TC$_{\mathrm{s}}$ & TC$_{\mathrm{f}}$ & Split \\
\midrule
DBLP-3  & 4{,}257  & 23{,}540    & 100 & 3  & 10 & 0.139 & 0.468 & 81/9/10 \\
Brain   & 5{,}000  & 1{,}955{,}488 & 20  & 10 & 12 & 0.024 & 0.070 & 81/9/10 \\
Reddit  & 8{,}291  & 264{,}050   & 20  & 4  & 10 & 0.030 & 0.556 & 81/9/10 \\
DBLP-10 & 28{,}085 & 236{,}894   & 128 & 10 & 27 & 0.823 & 0.823 & 75/5/20 \\
\bottomrule
\end{tabular}
\end{table}

\paragraph{Protocol and evaluation.}
We follow the transductive setting of
\citet{li2024graphssm}.
Graph structure and node features from all snapshots are available during training and inference, while supervision is restricted to the training nodes.
The support graph can therefore be constructed from the observed structural sequence without using validation or test labels.

For WaveGSSM, we use the train, validation, and test proportions listed in
\Cref{tab:nodeclf_data}.
Validation nodes are used only for model selection and early stopping.
WaveGSSM is trained with cross-entropy loss on the training nodes.
For each seed, we select the checkpoint with the highest validation Micro-F1 and report the corresponding test Micro-F1 and Macro-F1.
Results are averaged over five seeds from $0$ to $4$, with population standard deviation.

\paragraph{Baselines and result provenance.}
We compare against the baseline families reported by GraphSSM
\citep{li2024graphssm}.
Static graph embedding methods include DeepWalk
\citep{perozzi2014deepwalk}
and Node2Vec
\citep{grover2016node2vec}.
Temporal graph embedding methods include HTNE
\citep{zuo2018embedding},
M$^2$DNE
\citep{lu2019temporal},
and DynamicTriad
\citep{zhou2018dynamic}.
Discrete-time temporal GNN baselines include MPNN
\citep{panagopoulos2021transfer},
STAR \citep{xu2019spatio}, tNodeEmbed \citep{singer2019node}, EvolveGCN \citep{pareja2020evolvegcn}, SpikeNet \citep{li2023scaling}, and ROLAND \citep{you2022roland}. GraphSSM \citep{li2024graphssm} is the closest state-space comparison.

The results for these legacy baselines in \Cref{tab:small_scale_temporal_graphs} are taken from the GraphSSM benchmark. The GraphSSM row reports results from our rerun of the baseline.

\paragraph{WaveGSSM configuration.}
For temporal node classification, the observed graph at snapshot $t$ determines the activity vector $\m a_t$ and the corresponding operator $\m L_t^{(\ell)}$. We set $k_{\mathrm{sup}}=2$ for DBLP-3 and Brain, $k_{\mathrm{sup}}=1$ for Reddit, and $k_{\mathrm{sup}}=4$ for DBLP-10. These thresholds retain edges observed in at least the specified number of snapshots.

Each node snapshot is first encoded by a node-wise MLP and then processed by $L$ stacked WaveGSSM blocks with a learned initial motion state. No auxiliary spatial mixer is used. All four datasets use an FFN expansion ratio of $2$ and the \texttt{state} readout, which concatenates the terminal pattern and motion states, as described in \Cref{app:readout_details}. The selected representation is passed through a learned projection followed by a linear node classifier.

Training uses AdamW, gradient clipping at $1.0$, a \textsc{ReduceLROnPlateau} learning-rate schedule, and validation-based early stopping. Architecture and optimization hyperparameters are selected separately for each dataset using only its validation split. The resulting configurations are listed in \Cref{tab:nodeclf_hparams}.

\begin{table}[h]
\centering
\caption{WaveGSSM configurations for temporal node classification. Here, $d$ denotes hidden width and $L$ denotes the number of WaveGSSM blocks. All runs use AdamW, gradient clipping at $1.0$, validation-based early stopping, and a learned initial motion state.}
\label{tab:nodeclf_hparams}
\small
\begin{tabular}{lccccccc}
\toprule
Dataset & $d$ & $L$ & Epochs & Learning rate & Weight decay & Dropout & Readout \\
\midrule
DBLP-3  & 88  & 3 & 200 & $4.0\times10^{-3}$ & $5\times10^{-4}$ & 0.12 & state \\
Brain   & 224 & 2 & 220 & $4.8\times10^{-3}$ & $8\times10^{-4}$ & 0.20 & state \\
Reddit  & 144 & 6 & 400 & $1.2\times10^{-3}$ & $5\times10^{-5}$ & 0.05 & state \\
DBLP-10 & 280 & 2 & 480 & $4.5\times10^{-3}$ & $5\times10^{-4}$ & 0.13 & state \\
\bottomrule
\end{tabular}
\end{table}

\subsection{ERA5 Global Weather Forecasting}
\label{app:setup}

The ERA5 experiment evaluates the motivating forecasting setting of WaveGSSM, where a multivariate spatial field must be advanced over several autoregressive steps. We evaluate both pointwise forecast accuracy and agreement in large-scale anomaly patterns over a ten-day rollout.

\paragraph{Data and target variables.}
We use daily ERA5 reanalysis
\citep{hersbach2020era5}
regridded to a regular
$1^{\circ}$
latitude--longitude grid with
$360\times181$
grid points.
The prognostic state contains eight target channels.
Six are geopotential at
$150$,
$200$,
$300$,
$500$,
$850$,
and
$1000$\,hPa.
The remaining two are daily-maximum 2\,m temperature and level-1 volumetric soil moisture.

At each forecast step, the model also receives top-of-atmosphere incident solar radiation and sine/cosine encodings of the annual and daily cycles.
Static inputs consist of surface geopotential and the land--sea mask.
Dynamic channels are standardized using statistics from the training period and transformed back to physical units for evaluation.

\paragraph{Forecast protocol.}
We train on
1979-01-01
through
2023-12-31,
use 2024 for validation, and reserve 2025 for testing.
Each forecast starts from two consecutive daily states and advances the prognostic state by one day per autoregressive step.
During rollout, the predicted prognostic state is fed back into the next input together with known forcing and static features.
No future ERA5 target is used after initialization.

Training uses a three-step free-running objective.
Evaluation uses a ten-day autoregressive rollout.
Requiring a complete verification window gives
$354$
test initializations in 2025.
Atmospheric variables are evaluated globally, while soil moisture is evaluated only over land.

\paragraph{Graph and model configuration.}
All weather models use the same
$1^{\circ}$
grid and global level-5 icosahedral mesh following the graph construction of GraphCast
\citep{lam2023graphcast}.
The mesh contains
$10{,}242$
nodes.
The grid-to-mesh encoder contains $101{,}892$ directed edges, the mesh processor contains $81{,}900$ directed edges, and the mesh-to-grid decoder contains $195{,}480$ directed edges. Each graph edge carries four geometric features describing relative spherical displacement and distance, while each mesh node receives three geometric features.

For ERA5, $G_{\star}$ is the prescribed icosahedral mesh and all mesh edges remain available throughout the rollout. WaveGSSM uses hidden width $d=128$ and three processor blocks. Node and edge transformations use single-hidden-layer MLPs. The two input states initialize the pattern and motion states as described in \Cref{app:rollout_details}, and the forcing uses the temporal-difference term defined in \Cref{app:init_force}.

\paragraph{Baselines and controlled comparisons.}
GraphCast$^{*}$ is our central backbone-matched snapshot baseline.
It uses the same grid-to-mesh encoder, mesh-to-grid decoder, hidden width, processor depth, target variables, two-frame history, training horizon, loss, and optimization protocol as WaveGSSM.
It predicts future representations without the graph-wave pattern--motion transition.
The asterisk denotes our daily ERA5 implementation rather than the published GraphCast checkpoint or its original operational configuration.
We therefore refer to it as a GraphCast-style backbone-matched baseline.

GraphSSM \citep{li2024graphssm} is adapted to the same forecasting input and output interface using two graph state-space blocks with state dimension $16$. GNN-LSTM combines per-frame graph processing with a node-shared single-layer LSTM.
Both use the same forecast variables, data split, mesh interface, training targets, and evaluation protocol as WaveGSSM.
Because they retain their native temporal processors, we treat them as cross-model comparisons rather than controlled component ablations.
Persistence and climatology are included as non-learned references.
Persistence keeps the initialized prognostic state fixed, while climatology uses the ERA5 day-of-year mean computed from the 1979--2023 training period.

The retrained motion-state-carry ablation and frozen-checkpoint interventions are detailed in Appendices~\ref{app:weather_retrained_no_carry} and~\ref{app:weather_velocity_ablation}.

\paragraph{Training objective and optimization.}
All matched weather runs use a latitude-weighted multivariate mean-squared-error objective.
Grid-point weights are proportional to
$\cos\varphi$
and normalized to have unit mean.
Each target channel is weighted inversely by the variance of its daily increments over the training period.
Geopotential levels additionally use pressure-dependent weights, while the two surface channels use fixed task weights.
The loss is averaged over target variables, valid grid points, and the three free-running training steps.

We optimize with AdamW using an initial learning rate of $6\times10^{-4}$. Training runs for $300$ epochs on 4 H100 GPUs with a batch size of $4$ per device.
Gradient checkpointing is used inside the WaveGSSM processor.
The full WaveGSSM and retrained motion-state-carry ablation are selected independently on the 2024 validation set using the same checkpoint-selection rule.
All frozen interventions use the selected full WaveGSSM checkpoint without additional training or model selection.

\paragraph{Evaluation metrics.}
For forecast field $\hat y$ and ERA5 target $y$,
the latitude-weighted RMSE is
\begin{equation}
\operatorname{RMSE}
=
\left[
\frac{
\sum_i
w_i
\left(
\hat y_i-y_i
\right)^2
}{
\sum_i w_i
}
\right]^{1/2},
\qquad
w_i=\cos\varphi_i.
\label{eq:weather_rmse}
\end{equation}
The sum is taken over valid grid points.
Ocean points are excluded for soil moisture.
RMSE is computed separately for each target variable and forecast lead.

We compute anomaly correlation relative to the
1979--2023
ERA5 day-of-year climatology.
Let
$\hat a_i$
and
$a_i$
denote forecast and observed anomalies, with latitude-weighted spatial means
\begin{equation}
\overline{\hat a}_w
=
\frac{\sum_i w_i\hat a_i}{\sum_i w_i},
\qquad
\overline{a}_w
=
\frac{\sum_i w_i a_i}{\sum_i w_i}.
\end{equation}
The centred anomaly correlation is
\begin{equation}
\operatorname{AC}
=
\frac{
\sum_i
w_i
\left(
\hat a_i-\overline{\hat a}_w
\right)
\left(
a_i-\overline{a}_w
\right)
}{
\sqrt{
\sum_i
w_i
\left(
\hat a_i-\overline{\hat a}_w
\right)^2
}
\sqrt{
\sum_i
w_i
\left(
a_i-\overline{a}_w
\right)^2
}
}.
\label{eq:weather_ac}
\end{equation}
AC measures agreement in the spatial anomaly pattern after removing the weighted spatial mean.
We use it together with RMSE and do not interpret it as a direct measure of physical phase.

\section{Additional Temporal-Graph Details}
\label{app:additional_temporal}

This section provides the implementation details of the matched no-carry variant used in \Cref{sec:velocity_ablation}.

\subsection{Matched No-Carry Variant}
\label{app:nodeclf_velocity_ablation}

\paragraph{Matched no-carry variant.}
We construct a controlled variant, denoted WaveGSSM w/o motion-state carry. It retains the support graph, learned graph operator, forcing maps, stiffness, damping, initial-motion-state map, hidden width, number of blocks, task readout, classifier, loss, and optimization protocol of the full model. The only change is the state passed from one rollout step to the next.

Let the motion state supplied to transition $t$ be
\begin{equation}
\overline{\m V}_{t-1}^{(\ell)}
=
\begin{cases}
\m V_0^{(\ell)}, & t=1,\\
\Zero_{N\times d}, & t>1.
\end{cases}
\label{eq:no_velocity_input}
\end{equation}
The input-derived initial motion cue is therefore preserved, but a motion state produced during the rollout is not carried into the following transition. The instantaneous response at step $t$ is
\begin{equation}
\widetilde{\m V}_t^{(\ell)}
=
\left[
\overline{\m V}_{t-1}^{(\ell)}
+
\dt^{(\ell)}
\left(
-
\m L_t^{(\ell)}
\m U_{t-1}^{(\ell)}
\m K^{(\ell)}
+
\m F_t^{(\ell)}
\right)
\right]
\left(
\I_d
+
\dt^{(\ell)}
\mathbf{\Gamma}^{(\ell)}
\right)^{-1},
\label{eq:no_velocity_response}
\end{equation}
followed by the pattern-state update
\begin{equation}
\m U_t^{(\ell)}
=
\m U_{t-1}^{(\ell)}
+
\dt^{(\ell)}
\widetilde{\m V}_t^{(\ell)}.
\label{eq:no_velocity_displacement}
\end{equation}

For every transition after initialization,
$\overline{\m V}_{t-1}^{(\ell)}=\Zero$.
Eliminating the instantaneous auxiliary variable then gives
\begin{equation}
\m U_t^{(\ell)}
=
\m U_{t-1}^{(\ell)}
+
\left(
\dt^{(\ell)}
\right)^2
\left(
-
\m L_t^{(\ell)}
\m U_{t-1}^{(\ell)}
\m K^{(\ell)}
+
\m F_t^{(\ell)}
\right)
\left(
\I_d
+
\dt^{(\ell)}
\mathbf{\Gamma}^{(\ell)}
\right)^{-1},
\qquad t>1.
\label{eq:no_velocity_first_order}
\end{equation}
The recurrent state in Eq.~(\ref{eq:no_velocity_first_order}) is first-order. The next pattern state depends on the preceding pattern state and the current graph-conditioned forcing, but not on a motion state carried from the preceding step.

The ablated model preserves the same external block interface as the full model. Its terminal instantaneous response $\widetilde{\m V}_T^{(\ell)}$ occupies the motion-state slot wherever that tensor is required by the implementation, and the task-specific readout is otherwise unchanged. The comparison therefore removes persistent motion-state recurrence without changing the model width, decoder interface, or number of learnable parameters. The corresponding results are reported in \Cref{tab:velocity_carry_ablation}.

\section{Additional Weather Forecasting Results and Ablations}
\label{app:additional_weather}

This section reports the complete ERA5 forecast results and evaluates the role of the motion state. We first provide lead-wise skill for all predicted fields. We then evaluate a separately retrained model without motion-state carry and use frozen-checkpoint interventions to examine how the trained full model uses its internal motion-state pathway and terminal motion-state readout. All experiments use the 2025 ERA5 test set and the evaluation protocol in \Cref{app:setup}.

\subsection{Lead-Wise Forecast Skill}
\label{app:more_forecasting}

\Cref{tab:main_fields} reports the four fields highlighted in the main text, while \Cref{tab:more_levels} gives the remaining geopotential levels. For RMSE, we report the relative reduction of WaveGSSM over GraphCast$^{*}$
\begin{equation}
\operatorname{Reduction}_{\mathrm{RMSE}}
=
\frac{
\operatorname{RMSE}_{\mathrm{GC}^{*}}
-
\operatorname{RMSE}_{\mathrm{WaveGSSM}}
}{
\operatorname{RMSE}_{\mathrm{GC}^{*}}
}
\times 100\%.
\label{eq:relative_rmse_reduction}
\end{equation}
For anomaly correlation, we report the absolute gain
\begin{equation}
\Delta\mathrm{AC}
=
\mathrm{AC}_{\mathrm{WaveGSSM}}
-
\mathrm{AC}_{\mathrm{GC}^{*}}.
\label{eq:absolute_ac_gain}
\end{equation}
The absolute AC difference avoids inflating the apparent improvement as the reference correlation decreases at longer leads.

The two metrics reveal different lead-time trends. RMSE reductions are largest at short range and generally narrow with lead, while the AC advantage grows into the medium range and remains positive through day~10. WaveGSSM therefore retains better agreement with the observed anomaly pattern even as the pointwise RMSE margin becomes smaller.

WaveGSSM has lower RMSE for every reported geopotential and soil-moisture entry. For 2\,m daily maximum temperature, the models are effectively tied at day~10, where the rounded RMSE values differ by only $0.001$\,K. WaveGSSM has higher anomaly correlation for every reported field and lead.

\begin{table}[t]
\centering
\caption{Lead-wise forecast skill for the four fields highlighted in the main text on the 2025 ERA5 test set. The upper block reports latitude-weighted RMSE and its relative reduction under WaveGSSM. The lower block reports centred anomaly correlation and the absolute gain $\Delta\mathrm{AC}=\mathrm{AC}_{\mathrm{WaveGSSM}}-\mathrm{AC}_{\mathrm{GC}^{*}}$. RMSE units are metres for $z500$ and $z200$, kelvin for 2\,m daily maximum temperature, and $10^{-3}\,\mathrm{m}^{3}\mathrm{m}^{-3}$ for soil moisture layer~1.}
\label{tab:main_fields}
\setlength{\tabcolsep}{4.2pt}
\renewcommand{\arraystretch}{1.12}
\footnotesize
\resizebox{\textwidth}{!}{%
\begin{tabular}{@{}llcccccccccc@{}}
\toprule
\multirow{2}{*}{Field}
&
\multirow{2}{*}{Model}
&
\multicolumn{10}{c}{Lead time (days)}
\\
\cmidrule(lr){3-12}
&
&
1 & 2 & 3 & 4 & 5 & 6 & 7 & 8 & 9 & 10
\\
\midrule

\multicolumn{12}{@{}l}{\emph{RMSE}~$\downarrow$}
\\[1pt]

\multirow{3}{*}{$z500$ (m)}
& WaveGSSM
& 11.7 & 19.6 & 28.7 & 38.6 & 48.5 & 57.8 & 66.1 & 73.3 & 79.3 & 84.1
\\
& GraphCast$^{*}$
& 18.9 & 25.4 & 33.4 & 42.8 & 52.2 & 61.1 & 68.9 & 75.5 & 80.7 & 85.0
\\
& \impc{Reduction}
& \impc{38.1\%}
& \impc{22.9\%}
& \impc{14.1\%}
& \impc{9.6\%}
& \impc{7.0\%}
& \impc{5.3\%}
& \impc{4.1\%}
& \impc{2.8\%}
& \impc{1.7\%}
& \impc{1.1\%}
\\

\addlinespace[2pt]

\multirow{3}{*}{$z200$ (m)}
& WaveGSSM
& 15.1 & 24.7 & 35.8 & 48.1 & 60.6 & 72.5 & 83.4 & 93.0 & 101.4 & 108.2
\\
& GraphCast$^{*}$
& 24.7 & 33.6 & 44.0 & 56.7 & 69.8 & 82.4 & 93.9 & 103.6 & 111.6 & 118.3
\\
& \impc{Reduction}
& \impc{39.1\%}
& \impc{26.6\%}
& \impc{18.6\%}
& \impc{15.2\%}
& \impc{13.2\%}
& \impc{12.0\%}
& \impc{11.2\%}
& \impc{10.2\%}
& \impc{9.2\%}
& \impc{8.5\%}
\\

\addlinespace[2pt]

\multirow{3}{*}{$2$t\,max (K)}
& WaveGSSM
& 0.910 & 1.142 & 1.364 & 1.616 & 1.872 & 2.120 & 2.348 & 2.551 & 2.740 & 2.903
\\
& GraphCast$^{*}$
& 1.014 & 1.263 & 1.480 & 1.715 & 1.952 & 2.184 & 2.400 & 2.594 & 2.762 & 2.902
\\
& \impc{Reduction}
& \impc{10.2\%}
& \impc{9.6\%}
& \impc{7.8\%}
& \impc{5.8\%}
& \impc{4.1\%}
& \impc{2.9\%}
& \impc{2.2\%}
& \impc{1.7\%}
& \impc{0.8\%}
& \noimp{0.0\%}
\\

\addlinespace[2pt]

\multirow{3}{*}{sm\,L1 ($10^{-3}$)}
& WaveGSSM
& 10.50 & 15.98 & 19.67 & 22.81 & 25.73 & 28.53 & 31.15 & 33.59 & 35.87 & 37.97
\\
& GraphCast$^{*}$
& 11.12 & 16.94 & 20.86 & 24.13 & 27.08 & 29.83 & 32.36 & 34.72 & 36.97 & 39.05
\\
& \impc{Reduction}
& \impc{5.5\%}
& \impc{5.7\%}
& \impc{5.7\%}
& \impc{5.5\%}
& \impc{5.0\%}
& \impc{4.4\%}
& \impc{3.7\%}
& \impc{3.3\%}
& \impc{3.0\%}
& \impc{2.8\%}
\\

\midrule

\multicolumn{12}{@{}l}{\emph{Centred anomaly correlation}~$\uparrow$}
\\[1pt]

\multirow{3}{*}{$z500$}
& WaveGSSM
& 0.989 & 0.969 & 0.932 & 0.874 & 0.797 & 0.707 & 0.611 & 0.515 & 0.427 & 0.353
\\
& GraphCast$^{*}$
& 0.971 & 0.949 & 0.908 & 0.844 & 0.761 & 0.663 & 0.562 & 0.467 & 0.384 & 0.313
\\
& \impc{$\Delta$AC}
& \impc{+0.018}
& \impc{+0.020}
& \impc{+0.024}
& \impc{+0.030}
& \impc{+0.036}
& \impc{+0.044}
& \impc{+0.049}
& \impc{+0.048}
& \impc{+0.043}
& \impc{+0.040}
\\

\addlinespace[2pt]

\multirow{3}{*}{$z200$}
& WaveGSSM
& 0.989 & 0.971 & 0.938 & 0.887 & 0.817 & 0.735 & 0.645 & 0.554 & 0.469 & 0.395
\\
& GraphCast$^{*}$
& 0.971 & 0.949 & 0.911 & 0.851 & 0.772 & 0.678 & 0.579 & 0.486 & 0.405 & 0.336
\\
& \impc{$\Delta$AC}
& \impc{+0.018}
& \impc{+0.022}
& \impc{+0.027}
& \impc{+0.036}
& \impc{+0.045}
& \impc{+0.057}
& \impc{+0.066}
& \impc{+0.068}
& \impc{+0.064}
& \impc{+0.059}
\\

\addlinespace[2pt]

\multirow{3}{*}{$2$t\,max}
& WaveGSSM
& 0.937 & 0.899 & 0.853 & 0.791 & 0.717 & 0.636 & 0.553 & 0.473 & 0.392 & 0.321
\\
& GraphCast$^{*}$
& 0.921 & 0.875 & 0.824 & 0.758 & 0.680 & 0.594 & 0.508 & 0.424 & 0.350 & 0.287
\\
& \impc{$\Delta$AC}
& \impc{+0.016}
& \impc{+0.024}
& \impc{+0.029}
& \impc{+0.033}
& \impc{+0.037}
& \impc{+0.042}
& \impc{+0.045}
& \impc{+0.049}
& \impc{+0.042}
& \impc{+0.034}
\\

\addlinespace[2pt]

\multirow{3}{*}{sm\,L1}
& WaveGSSM
& 0.978 & 0.948 & 0.920 & 0.891 & 0.860 & 0.827 & 0.792 & 0.756 & 0.720 & 0.685
\\
& GraphCast$^{*}$
& 0.975 & 0.941 & 0.910 & 0.879 & 0.846 & 0.812 & 0.777 & 0.742 & 0.707 & 0.673
\\
& \impc{$\Delta$AC}
& \impc{+0.003}
& \impc{+0.007}
& \impc{+0.010}
& \impc{+0.012}
& \impc{+0.014}
& \impc{+0.015}
& \impc{+0.015}
& \impc{+0.014}
& \impc{+0.013}
& \impc{+0.012}
\\

\bottomrule
\end{tabular}%
}
\end{table}

\begin{table}[t]
\centering
\caption{Lead-wise forecast skill for the remaining geopotential levels. The upper block reports latitude-weighted RMSE in metres and its relative reduction under WaveGSSM. The lower block reports centred anomaly correlation and the absolute gain $\Delta\mathrm{AC}$. WaveGSSM improves both metrics at every pressure level and lead shown.}
\label{tab:more_levels}
\setlength{\tabcolsep}{4.2pt}
\renewcommand{\arraystretch}{1.12}
\footnotesize
\resizebox{\textwidth}{!}{%
\begin{tabular}{@{}llcccccccccc@{}}
\toprule
\multirow{2}{*}{Level}
&
\multirow{2}{*}{Model}
&
\multicolumn{10}{c}{Lead time (days)}
\\
\cmidrule(lr){3-12}
&
&
1 & 2 & 3 & 4 & 5 & 6 & 7 & 8 & 9 & 10
\\
\midrule

\multicolumn{12}{@{}l}{\emph{RMSE (m)}~$\downarrow$}
\\[1pt]

\multirow{3}{*}{$1000$\,hPa}
& WaveGSSM
& 9.7 & 15.6 & 22.3 & 29.3 & 36.1 & 42.4 & 47.8 & 52.5 & 56.2 & 59.2
\\
& GraphCast$^{*}$
& 15.6 & 20.1 & 25.9 & 32.4 & 39.0 & 45.0 & 50.1 & 54.4 & 57.9 & 60.7
\\
& \impc{Reduction}
& \impc{37.7\%}
& \impc{22.1\%}
& \impc{14.0\%}
& \impc{9.8\%}
& \impc{7.4\%}
& \impc{5.7\%}
& \impc{4.7\%}
& \impc{3.6\%}
& \impc{2.8\%}
& \impc{2.5\%}
\\

\addlinespace[2pt]

\multirow{3}{*}{$850$\,hPa}
& WaveGSSM
& 9.1 & 14.9 & 21.4 & 28.1 & 34.8 & 41.0 & 46.4 & 51.1 & 54.8 & 57.7
\\
& GraphCast$^{*}$
& 15.1 & 19.3 & 24.9 & 31.2 & 37.6 & 43.4 & 48.5 & 52.7 & 56.1 & 58.9
\\
& \impc{Reduction}
& \impc{39.7\%}
& \impc{22.7\%}
& \impc{14.2\%}
& \impc{9.9\%}
& \impc{7.3\%}
& \impc{5.6\%}
& \impc{4.3\%}
& \impc{3.1\%}
& \impc{2.2\%}
& \impc{1.9\%}
\\

\addlinespace[2pt]

\multirow{3}{*}{$300$\,hPa}
& WaveGSSM
& 16.4 & 26.7 & 39.0 & 52.6 & 66.1 & 78.9 & 90.4 & 100.4 & 109.0 & 115.9
\\
& GraphCast$^{*}$
& 25.8 & 34.9 & 45.9 & 59.0 & 72.2 & 84.8 & 96.1 & 105.5 & 113.1 & 119.3
\\
& \impc{Reduction}
& \impc{36.2\%}
& \impc{23.4\%}
& \impc{14.9\%}
& \impc{10.9\%}
& \impc{8.5\%}
& \impc{7.0\%}
& \impc{5.9\%}
& \impc{4.8\%}
& \impc{3.7\%}
& \impc{2.9\%}
\\

\addlinespace[2pt]

\multirow{3}{*}{$150$\,hPa}
& WaveGSSM
& 14.2 & 23.5 & 33.0 & 43.3 & 53.9 & 64.1 & 73.5 & 82.1 & 89.5 & 95.7
\\
& GraphCast$^{*}$
& 23.4 & 33.0 & 42.2 & 53.0 & 64.4 & 75.5 & 85.8 & 94.8 & 102.3 & 108.6
\\
& \impc{Reduction}
& \impc{39.2\%}
& \impc{28.8\%}
& \impc{21.7\%}
& \impc{18.2\%}
& \impc{16.3\%}
& \impc{15.0\%}
& \impc{14.2\%}
& \impc{13.4\%}
& \impc{12.5\%}
& \impc{11.9\%}
\\

\midrule

\multicolumn{12}{@{}l}{\emph{Centred anomaly correlation}~$\uparrow$}
\\[1pt]

\multirow{3}{*}{$1000$\,hPa}
& WaveGSSM
& 0.984 & 0.959 & 0.914 & 0.848 & 0.763 & 0.667 & 0.569 & 0.475 & 0.390 & 0.319
\\
& GraphCast$^{*}$
& 0.959 & 0.932 & 0.882 & 0.810 & 0.715 & 0.609 & 0.503 & 0.407 & 0.323 & 0.252
\\
& \impc{$\Delta$AC}
& \impc{+0.025}
& \impc{+0.027}
& \impc{+0.032}
& \impc{+0.038}
& \impc{+0.048}
& \impc{+0.058}
& \impc{+0.066}
& \impc{+0.068}
& \impc{+0.067}
& \impc{+0.067}
\\

\addlinespace[2pt]

\multirow{3}{*}{$850$\,hPa}
& WaveGSSM
& 0.986 & 0.962 & 0.920 & 0.856 & 0.775 & 0.680 & 0.584 & 0.489 & 0.405 & 0.333
\\
& GraphCast$^{*}$
& 0.961 & 0.936 & 0.889 & 0.820 & 0.729 & 0.625 & 0.521 & 0.425 & 0.341 & 0.270
\\
& \impc{$\Delta$AC}
& \impc{+0.025}
& \impc{+0.026}
& \impc{+0.031}
& \impc{+0.036}
& \impc{+0.046}
& \impc{+0.055}
& \impc{+0.063}
& \impc{+0.064}
& \impc{+0.064}
& \impc{+0.063}
\\

\addlinespace[2pt]

\multirow{3}{*}{$300$\,hPa}
& WaveGSSM
& 0.988 & 0.968 & 0.931 & 0.873 & 0.797 & 0.708 & 0.613 & 0.518 & 0.429 & 0.355
\\
& GraphCast$^{*}$
& 0.971 & 0.947 & 0.906 & 0.843 & 0.759 & 0.662 & 0.561 & 0.466 & 0.385 & 0.316
\\
& \impc{$\Delta$AC}
& \impc{+0.017}
& \impc{+0.021}
& \impc{+0.025}
& \impc{+0.030}
& \impc{+0.038}
& \impc{+0.046}
& \impc{+0.052}
& \impc{+0.052}
& \impc{+0.044}
& \impc{+0.039}
\\

\addlinespace[2pt]

\multirow{3}{*}{$150$\,hPa}
& WaveGSSM
& 0.989 & 0.969 & 0.938 & 0.890 & 0.827 & 0.751 & 0.669 & 0.584 & 0.504 & 0.433
\\
& GraphCast$^{*}$
& 0.970 & 0.944 & 0.907 & 0.851 & 0.777 & 0.690 & 0.596 & 0.507 & 0.428 & 0.361
\\
& \impc{$\Delta$AC}
& \impc{+0.019}
& \impc{+0.025}
& \impc{+0.031}
& \impc{+0.039}
& \impc{+0.050}
& \impc{+0.061}
& \impc{+0.073}
& \impc{+0.077}
& \impc{+0.076}
& \impc{+0.072}
\\

\bottomrule
\end{tabular}%
}
\end{table}

\subsection{Retrained Ablation of Persistent Motion-State Carry}
\label{app:weather_retrained_no_carry}

We first test whether carrying the motion state remains useful when the ablated model is allowed to retrain and adapt. The parameter-matched variant is trained independently under the same data, objective, optimizer, rollout horizon, and validation-selection protocol as WaveGSSM. Before every internal graph-wave transition, including the first, its incoming motion state is set to zero. Each transition still computes an instantaneous motion response and uses it to advance the pattern state. The terminal instantaneous motion state also remains available to the decoder.

We use mean geopotential RMSE over all six pressure levels and forecast days~1--5 as the aggregate endpoint
\begin{equation}
\overline{\operatorname{RMSE}}_{\mathrm{geo},1\text{--}5}
=
\frac{1}{30}
\sum_{p\in\mathcal P}
\sum_{h=1}^{5}
\operatorname{RMSE}(p,h),
\qquad
\mathcal P
=
\{150,200,300,500,850,1000\}\,\mathrm{hPa}.
\label{eq:weather_retrained_geopotential_endpoint}
\end{equation}
We also report z200 at regularly sampled forecast leads because it is already used as the representative upper-tropospheric field in the main text.

\begin{table}[t]
\centering
\caption{\textbf{Retrained ablation of motion-state carry on ERA5.} Both variants are independently trained under the same protocol with identical parameter counts. The aggregate geopotential RMSE averages all six pressure levels over forecast days~1--5. All values are in metres and lower is better.}
\label{tab:weather_retrained_no_carry}
\setlength{\tabcolsep}{4.0pt}
\renewcommand{\arraystretch}{1.08}
\small
\begin{tabular}{@{}lcccccc@{}}
\toprule
\multirow{2}{*}{Model}
& \multirow{2}{*}{\shortstack[c]{Geopotential RMSE\\6 levels, D1--5}}
& \multicolumn{5}{c}{z200 RMSE} \\
\cmidrule(lr){3-7}
& & D1 & D3 & D5 & D7 & D10 \\
\midrule
WaveGSSM
& \textbf{30.72}
& \textbf{15.08}
& \textbf{35.77}
& \textbf{60.59}
& \textbf{83.36}
& \textbf{108.17} \\
\begin{tabular}[c]{@{}l@{}}
WaveGSSM w/o\\
motion-state carry
\end{tabular}
& 31.53
& 15.59
& 37.16
& 62.49
& 85.58
& 110.84 \\
\midrule
Relative increase
& $+2.63\%$
& $+3.40\%$
& $+3.90\%$
& $+3.13\%$
& $+2.66\%$
& $+2.46\%$ \\
\bottomrule
\end{tabular}
\end{table}

Removing motion-state carry increases the mean geopotential RMSE from $30.72$ to $31.53$\,m over all six pressure levels and forecast days~1--5, corresponding to a $2.63\%$ increase. The same direction holds at every reported z200 lead, with relative increases between $2.46\%$ and $3.90\%$.

The difference is modest but directionally consistent across the reported leads. Because the ablated model is retrained from initialization, its remaining pathways can adapt to the removed motion-state carry. The remaining performance gap indicates that carrying the motion state provides additional predictive value within the proposed architecture.

\subsection{Frozen-Checkpoint Motion-State Interventions}
\label{app:weather_velocity_ablation}

The retrained comparison in \Cref{tab:weather_retrained_no_carry} evaluates motion-state carry after the ablated model has adapted through training. We next examine how the selected full predictor uses this pathway at inference time. We freeze the validation-selected WaveGSSM checkpoint and separately intervene on the motion state entering the internal graph-wave transitions and the terminal motion state supplied to the decoder.

\paragraph{Intervention semantics.}
Let $\alpha_{\mathrm c}$ scale the motion state entering the internal graph-wave transitions and $\alpha_{\mathrm r}$ scale the terminal motion state supplied to the decoder.
Before every internal transition, including the first, the incoming motion state is replaced by
\begin{equation}
\widetilde{\m V}_{t-1}^{(\ell)}
=
\alpha_{\mathrm c}
\m V_{t-1}^{(\ell)}.
\label{eq:weather_carry_intervention}
\end{equation}
The transition then computes its instantaneous motion response and advances the pattern state as usual.
At the decoder, the state readout becomes
\begin{equation}
\widetilde{\m H}_{\mathrm{state}}
=
\left[
\m U_T^{(L)},
\alpha_{\mathrm r}\m V_T^{(L)}
\right].
\label{eq:weather_readout_intervention}
\end{equation}

Setting $\alpha_{\mathrm c}=0$ suppresses both the initialized motion cue and the motion state carried between subsequent internal transitions, while preserving the instantaneous response computed by each transition. Setting $\alpha_{\mathrm r}=0$ removes only the terminal motion state from the decoder input.
Neither intervention changes the model parameters, graph structure, forcing, or tensor dimensions.

The explicit wave state is propagated only within each graph-wave processor call.
Across forecast days, predicted prognostic fields form the next rolling input history, from which a new wave state is reconstructed.
An explicit $(\m U,\m V)$ pair is not passed directly between forecast days.

\paragraph{Design and endpoint.}
All conditions use the same checkpoint selected only on 2024 validation data. They are evaluated on the same ordered 354 initializations from the held-out year 2025, all eight output channels, and lead days 1--10. The frozen $2\times2$ factorial evaluates
\begin{equation}
(\alpha_{\mathrm c},\alpha_{\mathrm r})
\in
\left\{
(1,1),
(0,1),
(1,0),
(0,0)
\right\}.
\end{equation}
The continuous analysis fixes one scale at one while sweeping the other over $\{1,.75,.5,.25,0\}$.
The scale grid was fixed before inspecting the test results, and its endpoints coincide with the corresponding factorial conditions.

For intervention condition
$m$,
initialization
$i$,
lead
$h$,
and output channel
$q$,
let
$\operatorname{wMSE}_{m,i,h,q}$
be the cosine-latitude-weighted mean squared error and let
$s_q$
be the fixed training scale of channel
$q$
stored with the checkpoint runtime.
The primary dimensionless endpoint is
\begin{equation}
E_m
=
\sqrt{
\frac{1}{NHQ}
\sum_{i=1}^{N}
\sum_{h=1}^{H}
\sum_{q=1}^{Q}
\frac{
\operatorname{wMSE}_{m,i,h,q}
}{
s_q^2
}
},
\qquad
\Delta_m
=
100
\left(
\frac{E_m}{E_{\mathrm{Full}}}
-
1
\right),
\label{eq:weather_velocity_endpoint}
\end{equation}
where
$N=354$,
$H=10$,
and
$Q=8$.
Soil moisture is evaluated over land.
Positive
$\Delta_m$
means that the intervention increases forecast error.

\begin{figure}[t]
\centering
\includegraphics[width=\linewidth]{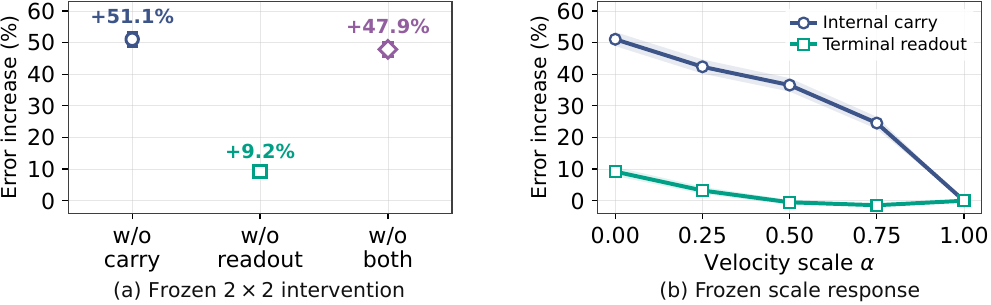}
\caption{Frozen-checkpoint functional reliance on the motion-state pathway.
All conditions use the same validation-selected checkpoint and 354 held-out 2025 ERA5 initializations without retraining.
Vertical axes report percentage changes in aggregate standardized error relative to Full over eight fields and forecast days~1--10.
(a) Non-Full conditions in the internal-pathway $\times$ readout design.
(b) Error response to scaling the internal motion-state pathway or terminal motion-state readout while keeping the other pathway unchanged.
The plotted ``w/o carry'' condition corresponds to $\alpha_{\mathrm c}=0$ and therefore suppresses the motion state entering all internal transitions, including the initialized motion state.
}
\label{fig:velocity_interventions}
\end{figure}

\paragraph{Factorial results.}
\Cref{tab:weather_velocity_factorial} reports the complete $2\times2$ design.
Suppressing the internal motion-state pathway produces the largest degradation. Removing terminal readout alone has a smaller effect, while joint removal is less harmful than the sum of the two individual effects, indicating that the two pathways are not simply additive.
\begin{table}[t]
\centering
\caption{Frozen-checkpoint motion-state intervention on ERA5. Values report the percentage change in aggregate standardized error relative to Full.}
\label{tab:weather_velocity_factorial}
\setlength{\tabcolsep}{6pt}
\small
\begin{tabular}{@{}lccc@{}}
\toprule
Condition
& Internal path
& Readout
& Error change (\%) \\
\midrule
Full
& On
& On
& $0.0$ \\
Internal path off
& Off
& On
& $+51.1$ \\
Readout off
& On
& Off
& $+9.2$ \\
Both off
& Off
& Off
& $+47.9$ \\
\bottomrule
\end{tabular}
\end{table}

\paragraph{Continuous scale response.}
\Cref{tab:weather_velocity_scale}
shows a graded response to the internal motion-state scale.
Reducing
$\alpha_{\mathrm c}$
consistently increases the aggregate endpoint.
The degradation at
$\alpha_{\mathrm c}=0$
is therefore part of a continuous sensitivity pattern rather than an isolated effect of complete removal.

Terminal readout behaves differently.
A scale of
$.75$
slightly improves the frozen endpoint,
$.5$
remains close to Full, and stronger attenuation becomes harmful.
This non-monotonic response suggests that the terminal motion state acts partly through feature calibration rather than simple dependence on its magnitude.
Because the scales are test-time interventions on the held-out year, the favorable
$.75$
condition is treated only as a diagnostic.

\begin{table}[t]
\centering
\caption{Motion-state scale sweep on the frozen ERA5 checkpoint. Values report percentage changes in aggregate standardized error relative to Full, with the other pathway kept at full scale. Lower is better.}
\label{tab:weather_velocity_scale}
\setlength{\tabcolsep}{7pt}
\small
\begin{tabular}{@{}ccc@{}}
\toprule
Scale $\alpha$
& \shortstack{Internal path\\error change (\%)}
& \shortstack{Terminal readout\\error change (\%)} \\
\midrule
$1.00$
& $0.0$
& $0.0$ \\
$0.75$
& $+24.6$
& $-1.4$ \\
$0.50$
& $+36.6$
& $-0.5$ \\
$0.25$
& $+42.4$
& $+3.2$ \\
$0.00$
& $+51.1$
& $+9.2$ \\
\bottomrule
\end{tabular}
\end{table}

\paragraph{Structure of the intervention effect.}
\Cref{fig:weather_velocity_detail} shows that suppressing the internal motion-state pathway is harmful across the full ten-day horizon and across geopotential, temperature, and soil moisture.
The relative effect is largest at short lead, where the Full forecast error is smallest.
This trend should not be interpreted as accumulation of an explicit motion state across forecast days, since each processor call reconstructs its wave state from the rolling weather history.

For z500, suppressing the internal pathway degrades both phase coherence and amplitude fidelity over days~3--7 and zonal wavenumbers~4--9.
The larger spectral contribution comes from amplitude error, so we do not attribute the intervention effect primarily to phase.

\begin{figure*}[t]
\centering
\includegraphics[width=\textwidth]{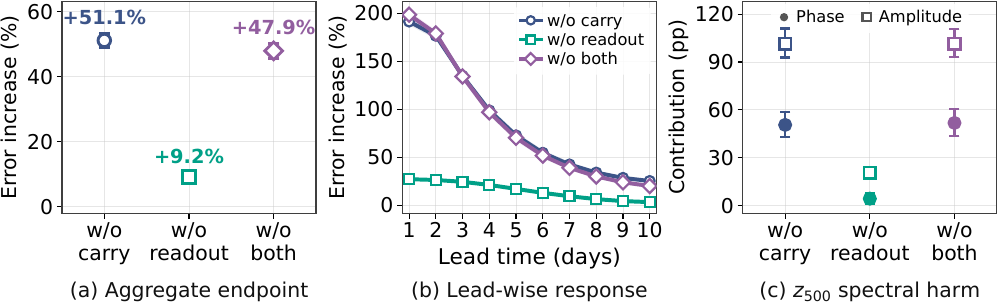}
\caption{Frozen-checkpoint motion-state diagnostics. (a) Aggregate standardized error increase relative to Full for the three non-Full interventions over eight fields and forecast days~1--10. (b) Lead-wise error increase relative to Full. (c) Phase and amplitude contributions to z500 spectral error over days~3--7 and zonal wavenumbers~4--9, normalized by the Full spectral error and reported in percentage points. Suppressing the internal motion-state pathway increases error across forecast leads and affects both spectral components, with the larger contribution coming from amplitude.}
\label{fig:weather_velocity_detail}
\end{figure*}

The phase-specific damage under $\alpha_{\mathrm c}=0$ increases with initialization wave activity (\Cref{fig:weather_velocity_regime}a). A season-adjusted linear fit gives a slope of $10.37$ percentage points per activity standard deviation, consistent with a stronger phase effect in more wave-active conditions. This relationship is specific to the phase component and does not imply the same activity dependence for total forecast error.

\begin{figure*}[t]
\centering
\includegraphics[width=\textwidth]{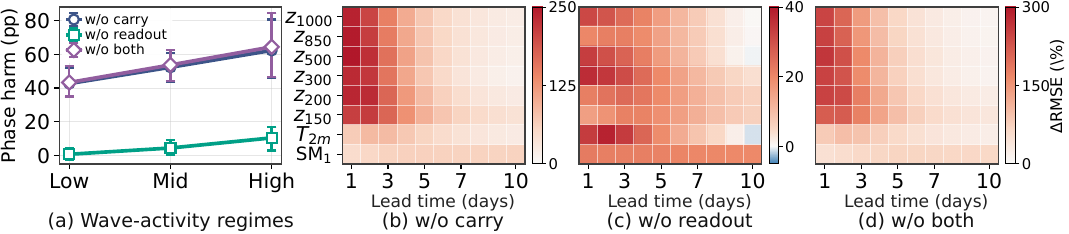}
\caption{Regime and variable detail for the frozen factorial. (a) Phase-harm contribution over days~3--7 and zonal wavenumbers~4--9, normalized by Full total spectral error and expressed in percentage points, stratified by initialization wave activity using thresholds fixed from the 2024 validation set. (b)--(d) Physical-RMSE increase relative to Full by target field and lead time under suppression of the internal motion-state pathway, removal of terminal readout, and joint removal. Each heatmap uses its own colorbar to show within-condition structure. Conditions without the internal motion-state pathway show broader degradation, while readout-only effects are smaller.}
\label{fig:weather_velocity_regime}
\end{figure*}

\FloatBarrier

\section{Atmospheric Forecast Diagnostics}
\label{app:additional_era5_diagnostics}

We next examine the spatial and structural character of the ERA5 forecast improvement. All diagnostics use the same 354 held-out 2025 initializations and the evaluation protocol in \Cref{app:setup}. We examine whether the gain is geographically broad, whether it extends across atmospheric levels, whether it can be explained by mean-bias correction, and whether the forecasts retain synoptic-scale variability.

\subsection{Additional Regime and Spectral Detail}
\label{app:analysis_detail}

\Cref{fig:analysis_appendix} provides the supporting breakdowns omitted from the focused main-text analysis.
The seasonal and regional views show that the z500 anomaly-correlation advantage is not confined to one initialization season or geographical region, while the day-7 spectrum resolves how power is distributed within and around the selected synoptic band.

\begin{figure}[th]
\centering
\includegraphics[width=\linewidth]{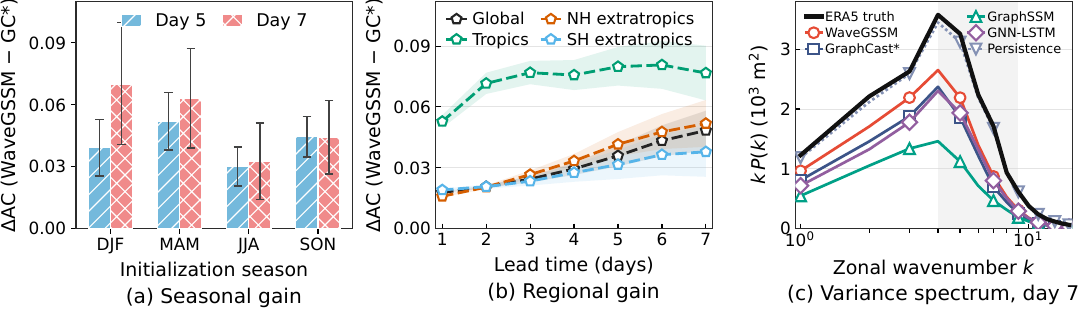}
\caption{Additional seasonal, regional, and spectral detail for the z500 forecast advantage. Panels (a) and (b) report $\Delta\mathrm{AC}=\mathrm{AC}_{\mathrm{WaveGSSM}}-\mathrm{AC}_{\mathrm{GC}^{*}}$ relative to GraphCast$^{*}$. Panel (a) groups the Northern-Hemisphere extratropical gain at days~5 and~7 by initialization season, where DJF, MAM, JJA, and SON denote December--February, March--May, June--August, and September--November. Panel (b) reports the gain by region and forecast lead. Panel (c) shows the day-7 area-weighted z500 spectrum, with zonal wavenumbers~4--9 shaded.}
\label{fig:analysis_appendix}
\end{figure}

\subsection{Horizontal Spatial Skill}
\label{app:spatial_skill_maps}

Global RMSE does not reveal whether a forecast improvement is spatially broad. We therefore evaluate local skill at every valid grid point.

For variable $v$, lead $h$, and grid point
$x=(\lambda,\varphi)$,
let
$\operatorname{RMSE}_{M}(v,h,x)$
denote the RMSE of model $M$ over the 354 test initializations. We define
\begin{equation}
S_{v,h}(x)
=
\frac{
\operatorname{RMSE}_{\mathrm{WaveGSSM}}(v,h,x)
-
\operatorname{RMSE}_{\mathrm{GC}^{*}}(v,h,x)
}{
\operatorname{RMSE}_{\mathrm{GC}^{*}}(v,h,x)
}.
\label{eq:local_normalized_rmse_gap}
\end{equation}
Negative values favor WaveGSSM and positive values favor GraphCast$^{*}$.
The maps use blue and red accordingly.
We also report the fraction of valid grid points with
$S_{v,h}(x)<0$
as a measure of the spatial coverage of the improvement.

The geopotential advantage is geographically broad. At day~1, WaveGSSM reduces global RMSE by $39.1\%$ for $z200$ and $38.1\%$ for $z500$, while achieving lower local RMSE at more than $99\%$ of valid grid points for both fields.
The advantage remains widespread as forecast lead increases.
At day~7, WaveGSSM has lower local RMSE at $94.6\%$ of valid $z200$ grid points and $81.8\%$ of valid $z500$ grid points.

The surface variables show a weaker and less uniform spatial advantage.
WaveGSSM improves 2\,m daily maximum temperature and soil moisture layer~1 over much of the domain, but the gains are less extensive than for upper-air geopotential.
Together, the maps show that the geopotential improvement is spatially broad rather than concentrated in a small number of favorable regions.

\begin{figure}[th]
\centering
\includegraphics[width=\linewidth]{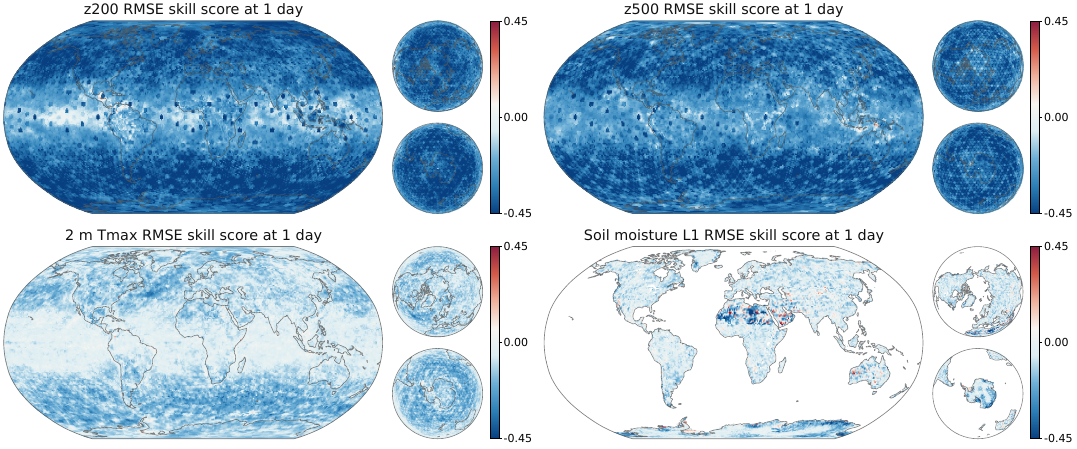}
\caption{Spatial RMSE skill relative to GraphCast$^{*}$ at lead day~1. Each panel shows the normalized local RMSE difference in Eq.~(\ref{eq:local_normalized_rmse_gap}) over the 2025 test initializations. Blue indicates lower WaveGSSM RMSE, while red indicates lower GraphCast$^{*}$ RMSE. The variables are $z200$, $z500$, 2\,m daily maximum temperature, and soil moisture layer~1. Polar insets display the high-latitude regions. The short-range geopotential improvement is broad rather than concentrated in a small number of locations.}
\label{fig:exp06_spatial_skill_lead01}
\end{figure}

\begin{figure}[th]
\centering
\includegraphics[width=\linewidth]{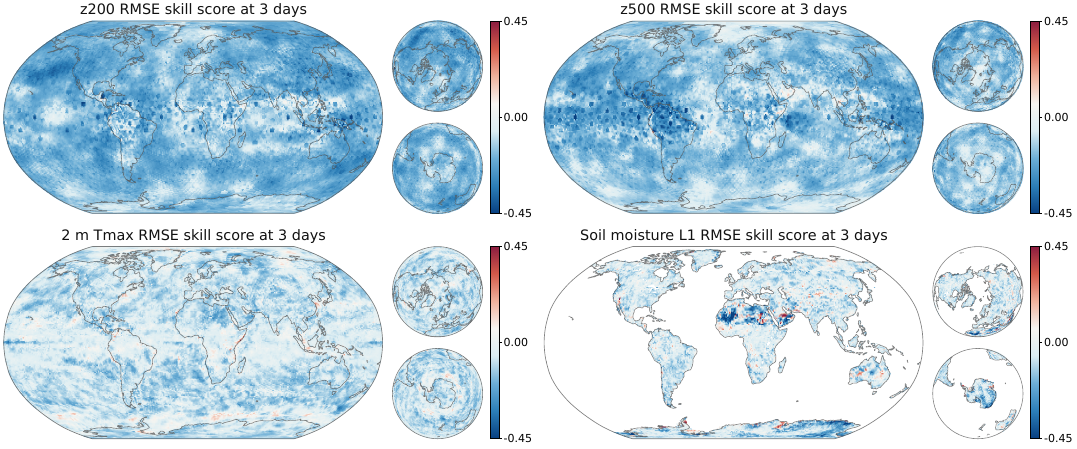}
\caption{
Spatial RMSE skill relative to GraphCast$^{*}$ at lead day~3.
The score and color convention follow
\Cref{fig:exp06_spatial_skill_lead01}.
WaveGSSM reduces global RMSE by
$18.6\%$
for $z200$ and
$14.1\%$
for $z500$,
and the corresponding local improvement remains geographically widespread.
}
\label{fig:exp06_spatial_skill_lead03}
\end{figure}

\begin{figure}[th]
\centering
\includegraphics[width=\linewidth]{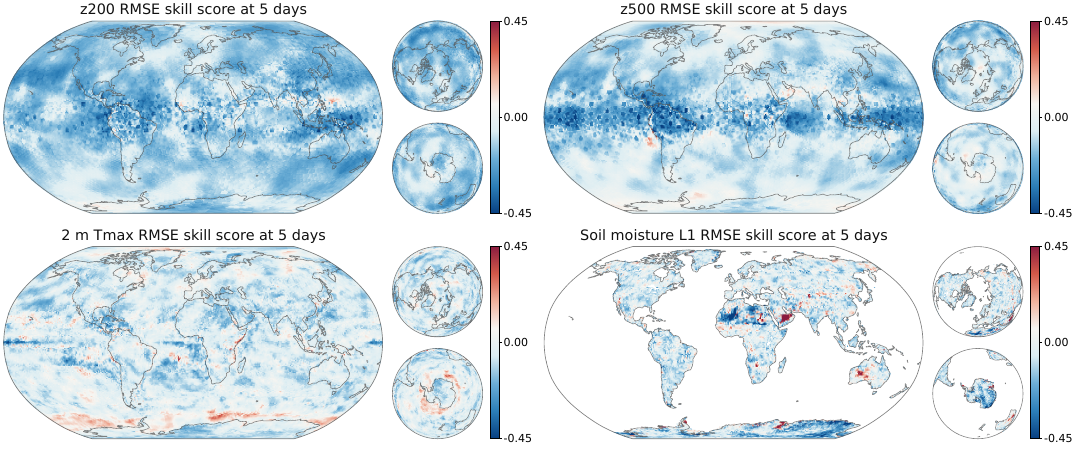}
\caption{
Spatial RMSE skill relative to GraphCast$^{*}$ at lead day~5.
Blue denotes lower local RMSE for WaveGSSM under the normalized score in
Eq.~(\ref{eq:local_normalized_rmse_gap}).
The strongest and most continuous improvements occur in the geopotential fields, while the surface-variable gains are more regionally heterogeneous.
}
\label{fig:exp06_spatial_skill_lead05}
\end{figure}

\begin{figure}[th]
\centering
\includegraphics[width=\linewidth]{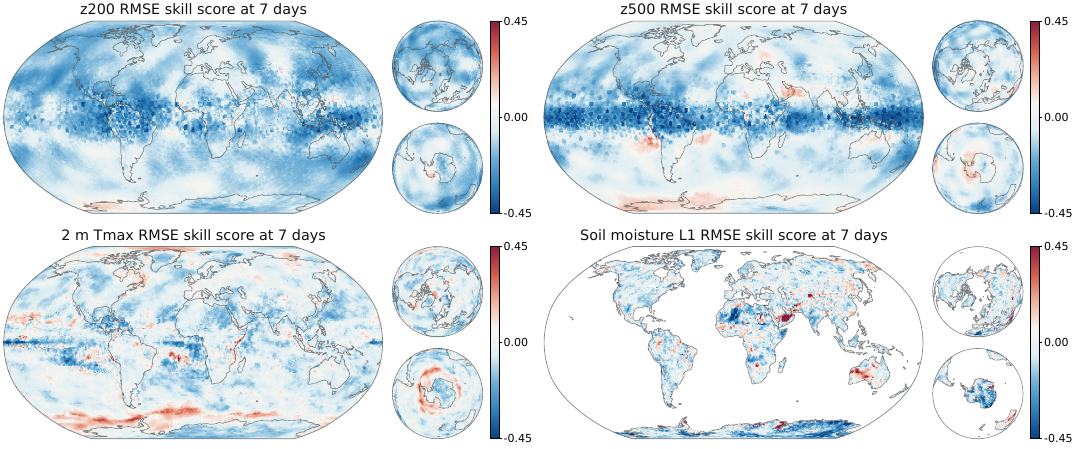}
\caption{
Spatial RMSE skill relative to GraphCast$^{*}$ at lead day~7.
The score and color convention follow the earlier lead-time maps.
WaveGSSM has lower local RMSE at
$94.6\%$
of valid $z200$ grid points and
$81.8\%$
of valid $z500$ grid points.
The pointwise margin is smaller than at short range, but the upper-air advantage remains spatially broad after one week.
}
\label{fig:exp06_spatial_skill_lead07}
\end{figure}

\subsection{Vertical Structure of Geopotential Skill}
\label{app:vertical_skill}

The horizontal maps show that the geopotential improvement is geographically broad. We next examine how this advantage varies across latitude and pressure level.

For model $M$, lead $h$, latitude $\varphi$, and pressure level $p$, we compute RMSE over longitude and the test initializations
\begin{equation}
\operatorname{RMSE}_{M}(h,\varphi,p)
=
\left[
\frac{
1
}{
N_{\mathrm{test}} I
}
\sum_{n=1}^{N_{\mathrm{test}}}
\sum_{i=1}^{I}
\left(
\hat z_{M,n}(h,\lambda_i,\varphi,p)
-
z_{n}(h,\lambda_i,\varphi,p)
\right)^2
\right]^{1/2},
\label{eq:latitude_pressure_rmse}
\end{equation}
where
$N_{\mathrm{test}}=354$
and
$I=360$.
We then define the normalized skill difference
\begin{equation}
S_h(\varphi,p)
=
\frac{
\operatorname{RMSE}_{\mathrm{WaveGSSM}}(h,\varphi,p)
-
\operatorname{RMSE}_{\mathrm{GC}^{*}}(h,\varphi,p)
}{
\operatorname{RMSE}_{\mathrm{GC}^{*}}(h,\varphi,p)
}.
\label{eq:vertical_normalized_rmse_gap}
\end{equation}
Negative values favor WaveGSSM.

\Cref{fig:exp09_vertical_geopotential_skill} shows that the improvement extends across the predicted pressure levels rather than being confined to a single field.
The strongest and most persistent gains appear in the upper-level geopotential fields, particularly around $200$ and $300$\,hPa. This pattern is consistent with the lead-wise results, where the upper-level fields retain some of the largest medium-range advantages.

The latitude--pressure structure is also coherent across broad regions rather than appearing as isolated cells.
Together with the horizontal maps, this shows that the geopotential improvement extends across both geographical location and atmospheric level.

\begin{figure}[th]
\centering
\includegraphics[width=\linewidth]{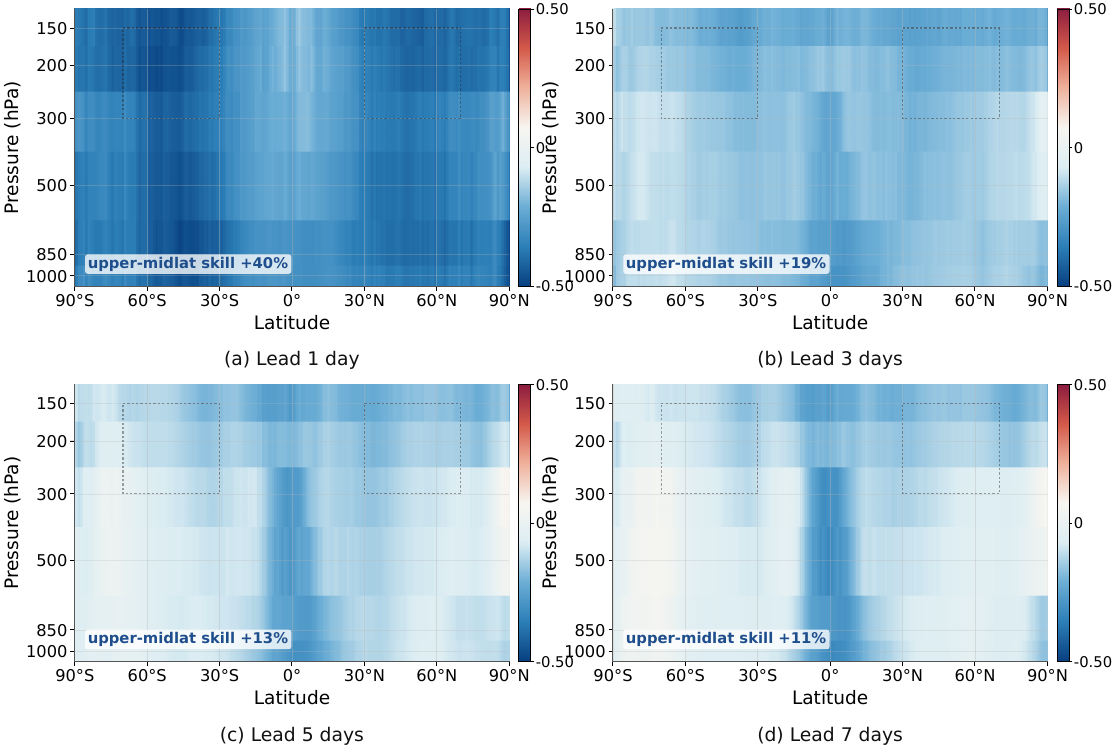}
\caption{Latitude--pressure structure of geopotential skill. Each panel shows the normalized RMSE difference in Eq.~(\ref{eq:vertical_normalized_rmse_gap}) over longitude and the 2025 test initializations. Blue indicates lower WaveGSSM RMSE and red indicates lower GraphCast$^{*}$ RMSE. The advantage extends across multiple pressure levels and remains strongest in the upper-level geopotential fields.}
\label{fig:exp09_vertical_geopotential_skill}
\end{figure}

\subsection{Systematic Bias Analysis}
\label{app:bias_maps}

Lower RMSE can reflect better transient forecasts, smaller systematic bias, or both. We therefore examine the mean forecast bias separately from the total error.

For model $M$, variable $v$, lead $h$, and grid point $x=(\lambda,\varphi)$, we define
\begin{equation}
B_M(v,h,x)
=
\frac{1}{N_{\mathrm{test}}}
\sum_{n=1}^{N_{\mathrm{test}}}
\left[
\hat y_{M,n}(v,h,x)
-
y_n(v,h,x)
\right],
\qquad
N_{\mathrm{test}}=354.
\label{eq:spatial_mean_bias}
\end{equation}
Positive values indicate forecasts above ERA5 on average, while negative values indicate forecasts below ERA5.

To compare bias magnitude between models, we use
\begin{equation}
D_{\mathrm{bias}}(v,h,x)
=
\left|
B_{\mathrm{WaveGSSM}}(v,h,x)
\right|
-
\left|
B_{\mathrm{GC}^{*}}(v,h,x)
\right|.
\label{eq:absolute_bias_gap}
\end{equation}
Negative values favor WaveGSSM.
We summarize each map using the latitude-weighted absolute bias
\begin{align}
\operatorname{ABias}_M(v,h)
&=
\frac{
\sum_x
w_x
\left|
B_M(v,h,x)
\right|
}{
\sum_x w_x
},
\qquad
w_x=\cos\varphi_x,
\\
\operatorname{Reduction}_{\mathrm{bias}}
&=
\frac{
\operatorname{ABias}_{\mathrm{GC}^{*}}
-
\operatorname{ABias}_{\mathrm{WaveGSSM}}
}{
\operatorname{ABias}_{\mathrm{GC}^{*}}
}
\times 100\%.
\label{eq:relative_bias_reduction}
\end{align}
For soil moisture, the sums include land points only.

WaveGSSM substantially reduces systematic bias in the geopotential fields. For $z200$, the latitude-weighted absolute bias is reduced by $61.6\%$, $62.4\%$, and $59.3\%$ at days~3, 5, and~7.
The corresponding reductions for $z500$ are $35.0\%$, $30.9\%$, and $27.3\%$.
The improvement therefore persists across the medium-range forecast window.

The surface fields show a weaker effect.
At day~7, absolute bias is reduced by $12.1\%$ for 2\,m daily maximum temperature, while soil moisture changes little.
The strongest bias reductions therefore occur in the geopotential fields, where the overall forecast advantage is also most pronounced.

\begin{figure}[th]
\centering
\includegraphics[width=\linewidth]{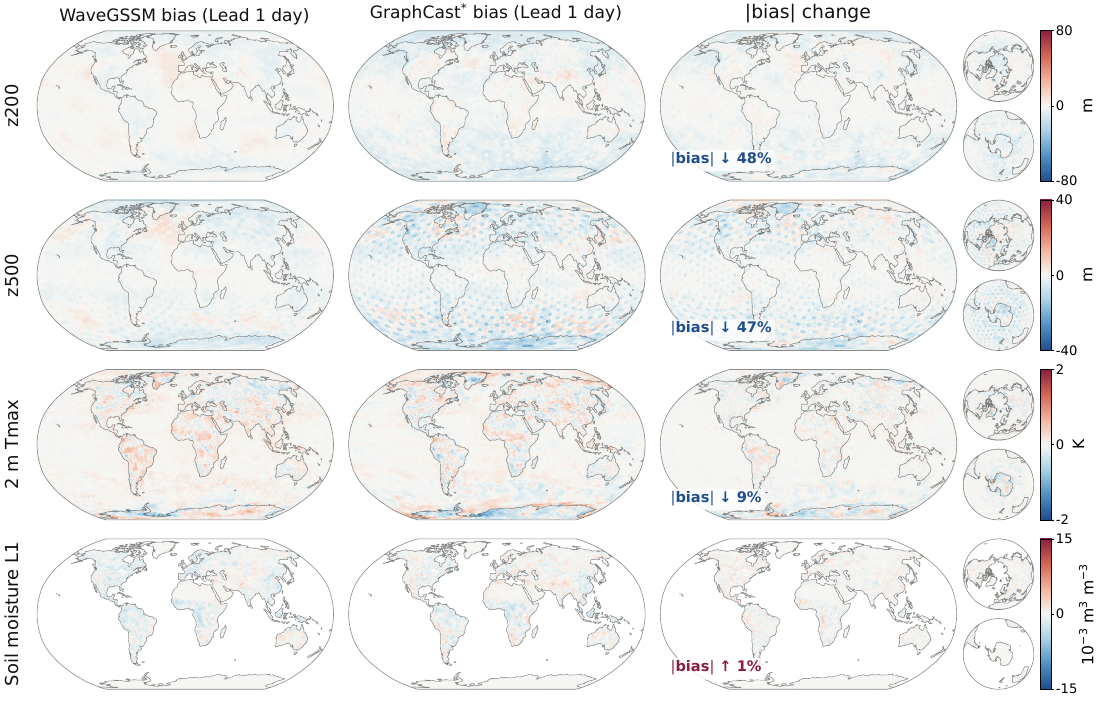}
\caption{Systematic forecast bias at lead day~1. The first two columns show signed mean bias for WaveGSSM and GraphCast$^{*}$. The third column shows the bias-magnitude difference in Eq.~(\ref{eq:absolute_bias_gap}). Blue indicates smaller bias magnitude for WaveGSSM and red indicates smaller bias magnitude for GraphCast$^{*}$.}
\label{fig:exp08_bias_maps_lead01}
\end{figure}

\begin{figure}[th]
\centering
\includegraphics[width=\linewidth]{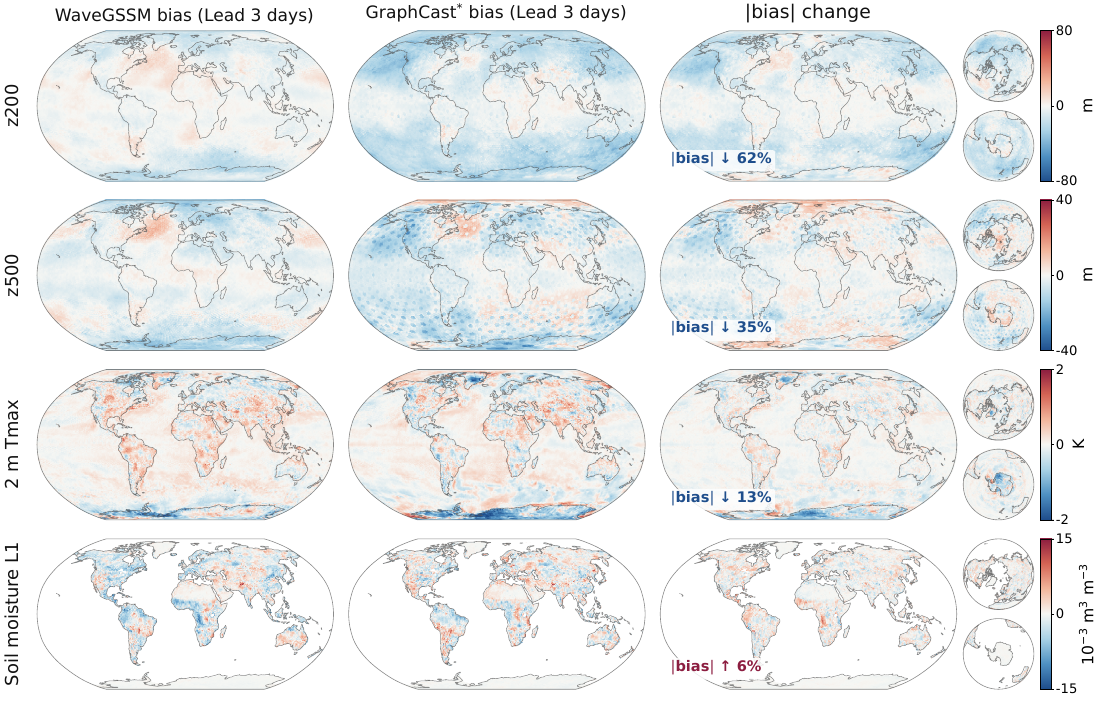}
\caption{Systematic forecast bias at lead day~3. WaveGSSM reduces latitude-weighted absolute bias by $61.6\%$ for $z200$ and $35.0\%$ for $z500$. The surface-field changes are smaller.}
\label{fig:exp08_bias_maps_lead03}
\end{figure}

\begin{figure}[th]
\centering
\includegraphics[width=\linewidth]{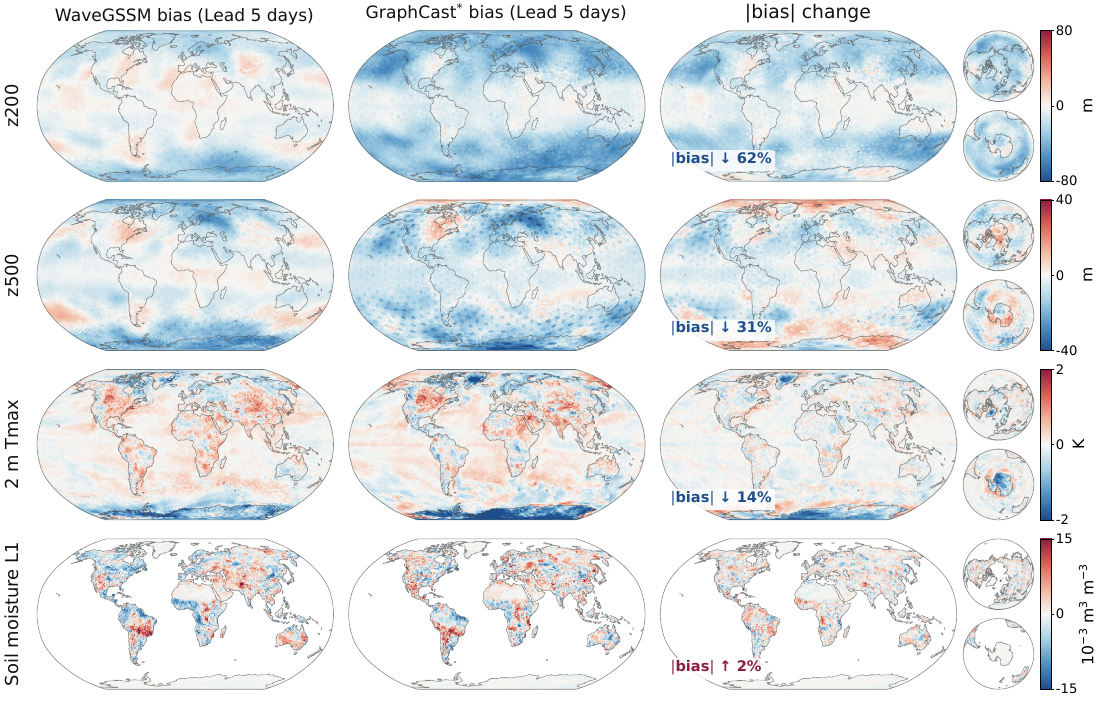}
\caption{Systematic forecast bias at lead day~5. WaveGSSM reduces latitude-weighted absolute bias by $62.4\%$ for $z200$ and $30.9\%$ for $z500$. The geopotential bias reduction remains broad at medium range.}
\label{fig:exp08_bias_maps_lead05}
\end{figure}

\begin{figure}[th]
\centering
\includegraphics[width=\linewidth]{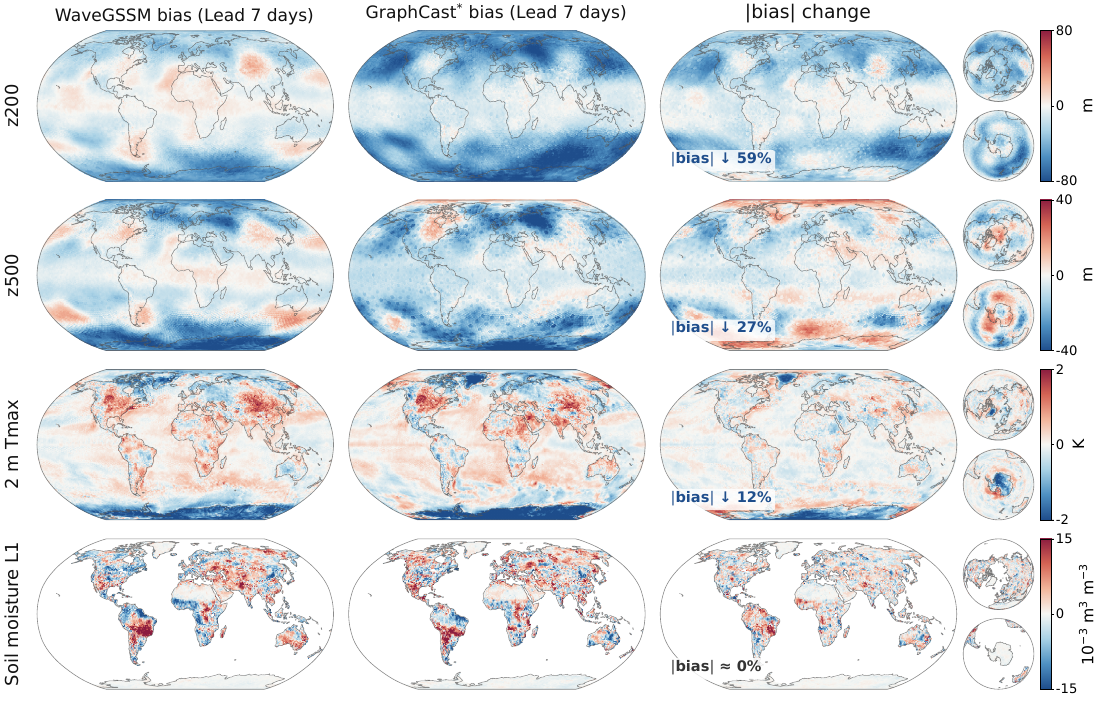}
\caption{Systematic forecast bias at lead day~7. WaveGSSM reduces latitude-weighted absolute bias by $59.3\%$ for $z200$, $27.3\%$ for $z500$, and $12.1\%$ for 2\,m daily maximum temperature. Soil-moisture bias changes little.}
\label{fig:exp08_bias_maps_lead07}
\end{figure}

\subsection{Synoptic-Scale Spectral Retention}
\label{app:spectral}

Lower squared error can also be obtained by damping spatial variability toward a smoother forecast
\citep{lam2023graphcast,price2025gencast}.
We therefore examine whether WaveGSSM preserves the spatial scales associated with evolving weather systems rather than obtaining lower error through stronger smoothing.

For each initialization and forecast lead, we compute geopotential anomalies relative to the 1979--2023 ERA5 day-of-year climatology
\begin{equation}
a(\varphi_j,\lambda_i)
=
f(\varphi_j,\lambda_i)
-
c(\varphi_j,\lambda_i),
\label{eq:spectral_anomaly}
\end{equation}
where
$f$
denotes either a forecast or the corresponding ERA5 field.
Geopotential is converted to geopotential height before spectral analysis.

At each latitude
$\varphi_j$,
we compute the discrete Fourier transform along the
$I=360$
longitude points
\begin{equation}
\widehat a_j(k)
=
\sum_{i=0}^{I-1}
a(\varphi_j,\lambda_i)
\exp
\left(
-\frac{2\pi\mathrm{i}ki}{I}
\right),
\qquad
k=0,\ldots,\left\lfloor\frac{I}{2}\right\rfloor .
\label{eq:zonal_dft}
\end{equation}
The corresponding one-sided Parseval-normalized power is
\begin{equation}
P_j(k)
=
\frac{m(k)}{I^2}
\left|
\widehat a_j(k)
\right|^2,
\label{eq:zonal_power}
\end{equation}
where
$m(k)=2$
for positive non-Nyquist wavenumbers and
$m(k)=1$
for the zero and Nyquist components.
This normalization makes the summed spectrum equal to the zonal anomaly variance.

For initialization $n$, we aggregate the spectra across latitude using cosine weights
\begin{equation}
P_n(k)
=
\frac{
\sum_j
\cos\varphi_j
P_{n,j}(k)
}{
\sum_j
\cos\varphi_j
}.
\label{eq:latitude_weighted_spectrum}
\end{equation}
The spectrum shown in \Cref{fig:analysis_appendix}c is obtained by averaging $P_n(k)$ over the $354$ test initializations.

We use zonal wavenumbers $4$--$9$ as an operational synoptic-scale band.
For each initialization, the retained variance ratio is
\begin{equation}
r_n
=
\frac{
\sum_{k=4}^{9}
P_n^{\mathrm{model}}(k)
}{
\sum_{k=4}^{9}
P_n^{\mathrm{ERA5}}(k)
},
\qquad
R_{\mathcal B}
=
\frac{1}{N_{\mathrm{test}}}
\sum_{n=1}^{N_{\mathrm{test}}}
r_n.
\label{eq:mean_synoptic_retention}
\end{equation}
A value of one indicates that the forecast retains the same amount of variance as ERA5 in this band.
The retention ratio measures amplitude at the selected scales, while anomaly correlation measures agreement in the spatial anomaly pattern.

\Cref{fig:analysis}b shows $R_{\mathcal B}$ across forecast lead for z500. At day~7, WaveGSSM retains $68\%$ of the ERA5 synoptic-scale variance, compared with $59\%$ for GraphCast$^{*}$ and $37\%$ for GraphSSM. \Cref{fig:analysis_appendix}c shows the corresponding day-7 spectrum, where WaveGSSM remains closest to ERA5 across much of the selected band.

Persistence retains nearly all of its initial spatial variance but has substantially lower forecast skill as weather systems evolve.
Variance retention alone is therefore not sufficient.
Together with the anomaly-correlation results, the spectral analysis shows that WaveGSSM retains more synoptic-scale variability while maintaining better agreement with the observed atmospheric pattern.

\section{Additional Forecast Analysis}
\label{app:additional_forecast_analysis}

\subsection{Auditable Forecast Case}
\label{app:case_study}

Aggregate scores establish that WaveGSSM improves pattern placement on average, but a case study makes the failure mode visually concrete. The case in \Cref{fig:exp07_z500_case_evolution} is selected by an auditable rule rather than by hand. Among 2025 Northern-Hemisphere extratropical z500 forecasts, the selected initialization must have strong ERA5 wave amplitude, a positive day~7 anomaly-correlation gain for WaveGSSM relative to GraphCast$^{*}$, and a lower WaveGSSM RMSE. This ranking rule avoids a common pitfall in qualitative forecast figures, where an example can look convincing but be difficult to reproduce.

The plotted quantity is the z500 zonal anomaly. This removes the dominant meridional background gradient and exposes the ridge and trough phase. The selected forecast is initialized on 2025-03-29 and verified at day~7 on 2025-04-05. The numerical evolution is consistent with the visual story. At day~1, both learned forecasts are accurate, but WaveGSSM already reduces RMSE by $37.9\%$ relative to GraphCast$^{*}$. At day~3 the RMSE reduction is $41.7\%$. At day~5 it is $30.6\%$. By day~7, GraphCast$^{*}$ has anomaly correlation $0.21$, while WaveGSSM reaches $0.62$ and lowers RMSE by $27.5\%$. This is the failure mode from the introduction in a real forecast. A snapshot model can retain a plausible-looking pattern but place it poorly. The motion-aware rollout keeps the same large-scale wave structure more closely aligned with ERA5.

\begin{figure}[th]
\centering
\includegraphics[width=\linewidth]{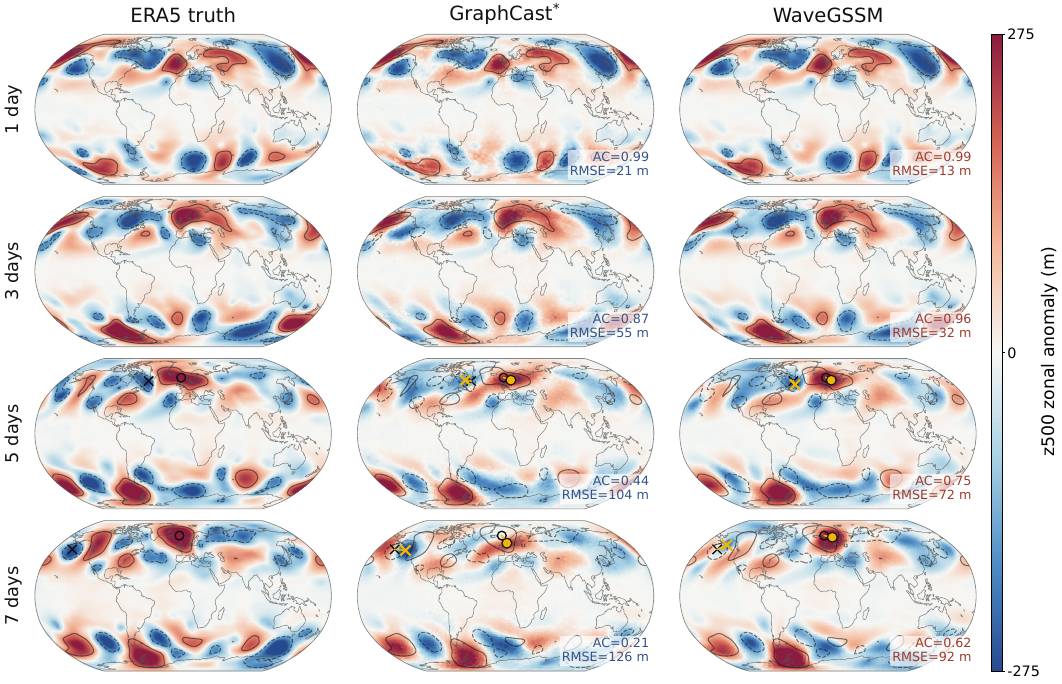}
\caption{Auditable z500 forecast case showing phase alignment over a one-week rollout. Rows are lead days~1, 3, 5, and~7. Columns are ERA5 truth, GraphCast$^{*}$, and WaveGSSM. Shading shows the 500\,hPa geopotential-height zonal anomaly (m). Black contours and markers trace the ERA5 ridge and trough reference, while yellow markers and segments show predicted centres and displacement. The case is initialized on 2025-03-29 and selected by the ranking rule described in the text. At day~7, WaveGSSM raises anomaly correlation from $0.21$ to $0.62$ and reduces RMSE by $27.5\%$ relative to GraphCast$^{*}$, illustrating the same phase-preservation effect quantified in the aggregate diagnostics.}
  \label{fig:exp07_z500_case_evolution}
\end{figure}

\subsection{Extratropical Skill Across Model Families}
\label{app:waveguide_skill}

The preceding results compare WaveGSSM primarily with the backbone-matched GraphCast$^{*}$ model over the global domain. We next ask whether the advantage remains visible in the extratropics and against other temporal model families.

We define the evaluation region as
\begin{equation}
\mathcal E
=
\left\{
(\lambda,\varphi)
\ \middle|\
30^{\circ}
\leq
|\varphi|
\leq
70^{\circ}
\right\},
\label{eq:extratropical_region}
\end{equation}
including all longitudes in both hemispheres. This fixed latitude band contains much of the midlatitude circulation in which synoptic ridges and troughs evolve. It is used as an operational extratropical diagnostic rather than as a dynamically identified atmospheric waveguide, whose position varies between cases and seasons.

RMSE and centred anomaly correlation are computed using the definitions in \Cref{app:setup}, restricted to grid points in $\mathcal E$ and retaining the cosine-latitude weights.
The comparison includes WaveGSSM, GraphCast$^{*}$, GraphSSM, and GNN-LSTM.
GraphCast$^{*}$ is the backbone-matched snapshot predictor.
GraphSSM provides a graph state-space comparison, while GNN-LSTM represents a conventional recurrent temporal model.
The latter two retain their own temporal processors and should therefore be interpreted as cross-model baselines rather than controlled architectural ablations.

Across the lead days shown in \Cref{fig:exp10_waveguide_skill_curves}, WaveGSSM achieves the lowest RMSE and highest anomaly correlation for both $z200$ and $z500$. At day~7, it reduces $z200$ RMSE by $11.2\%$ relative to GraphCast$^{*}$, $9.4\%$ relative to GraphSSM, and $4.1\%$ relative to GNN-LSTM. For $z500$, the corresponding reductions are $4.5\%$, $8.6\%$, and $3.4\%$.

The anomaly-correlation curves show a clearer separation than the RMSE curves at the later leads. This is consistent with the global result that WaveGSSM retains better agreement with the observed anomaly pattern as the rollout progresses. The comparison shows that the improvement is not restricted to one matched snapshot baseline and remains present in a region where evolving synoptic structures are prominent. It does not, by itself, identify which architectural component causes the difference between WaveGSSM and the two external temporal baselines.

\begin{figure}[th]
\centering
\includegraphics[width=\linewidth]{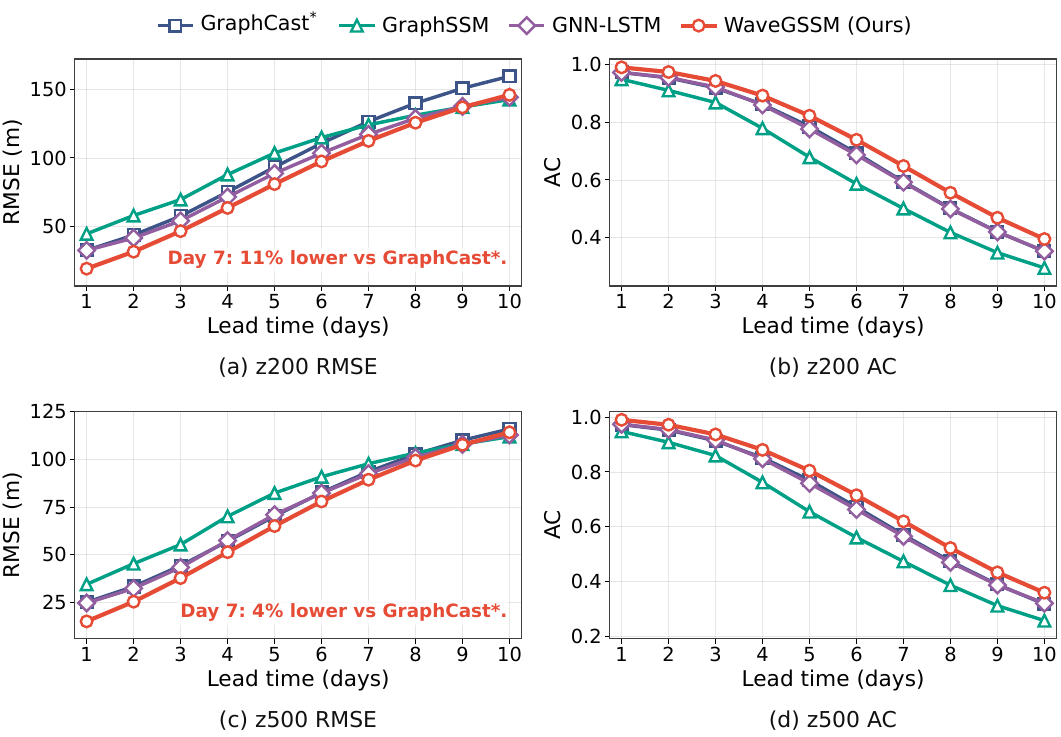}
\caption{Forecast skill across temporal model families in the extratropics. RMSE and centred anomaly correlation are computed for $z200$ and $z500$ over all longitudes between $30^{\circ}$ and $70^{\circ}$ absolute latitude. The evaluation uses the 354 test initializations from 2025 and the same cosine-latitude weighting as the global metrics. Curves compare WaveGSSM, the backbone-matched GraphCast$^{*}$ model, GraphSSM, and GNN-LSTM. WaveGSSM has the lowest RMSE and highest anomaly correlation throughout the displayed day-1 to day-7 horizon.}
\label{fig:exp10_waveguide_skill_curves}
\end{figure}

\subsection{Spatial Comparison with GraphSSM}
\label{app:graphssm_spatial_comparison}

The previous subsection shows that WaveGSSM outperforms GraphSSM in aggregate extratropical scores. We now examine whether this difference is geographically widespread or concentrated in a small number of locations.

For variable $v$, lead $h$, and grid point $x$, we define
\begin{equation}
S_{v,h}^{\mathrm{GraphSSM}}(x)
=
\frac{
\operatorname{RMSE}_{\mathrm{WaveGSSM}}(v,h,x)
-
\operatorname{RMSE}_{\mathrm{GraphSSM}}(v,h,x)
}{
\operatorname{RMSE}_{\mathrm{GraphSSM}}(v,h,x)
}.
\label{eq:local_graphssm_rmse_gap}
\end{equation}
Negative values indicate lower local RMSE for WaveGSSM.
As in
\Cref{app:spatial_skill_maps},
the local RMSE is computed over the 354 test initializations independently at every grid point.

WaveGSSM has lower latitude-weighted global RMSE than GraphSSM for both upper-air fields at all three displayed leads. The relative reductions for $z200$ and $z500$ are $31.9\%$ and $30.7\%$ at day~3, $21.3\%$ and $20.3\%$ at day~5, and $9.2\%$ and $8.3\%$ at day~7.
The reduction narrows with lead, but the spatial advantage remains broad.
At day~7, WaveGSSM has lower local RMSE at $92.8\%$ of valid $z200$ grid points and $89.4\%$ of valid $z500$ grid points.

The maps therefore show that the aggregate improvement over GraphSSM is not driven by a small number of favorable regions. The strongest margins occur at days~3 and~5, while the day-7 comparison remains positive across most grid points even as the global percentage difference becomes smaller.

GraphSSM and WaveGSSM differ in their state parameterization, processor design, and model configuration. This comparison consequently does not constitute an ablation of motion-state carry and cannot attribute the spatial difference to a single component. Its role is to show that WaveGSSM's advantage extends beyond the backbone-matched GraphCast$^{*}$ comparison and remains geographically broad against an independently designed graph state-space model.

\begin{figure}[th]
\centering
\includegraphics[width=\linewidth]{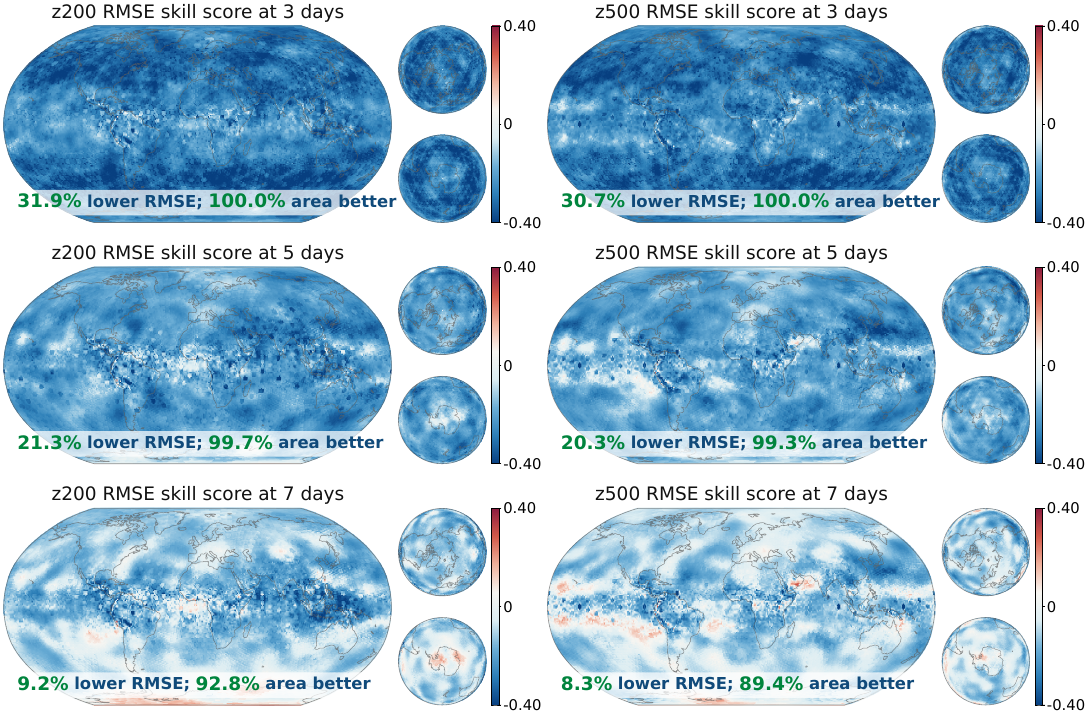}
\caption{Spatial forecast skill relative to GraphSSM. Each panel shows the normalized local RMSE difference in Eq.~(\ref{eq:local_graphssm_rmse_gap}) for $z200$ or $z500$ at lead days~3, 5, and~7. Blue indicates lower WaveGSSM RMSE, while red indicates lower GraphSSM RMSE. Polar insets display the high-latitude regions. The badges report the latitude-weighted global RMSE reduction and the fraction of valid grid points at which WaveGSSM has lower RMSE. The comparison is between two distinct graph state-space architectures and is not a component ablation.}
\label{fig:exp12_graphssm_spatial_skill}
\end{figure}

\end{document}

%% file: math_commands.tex
\usepackage{amsmath,amsfonts,bm}

\def\eqref#1{equation~\ref{#1}}

\def\1{\bm{1}}

\DeclareMathAlphabet{\mathsfit}{\encodingdefault}{\sfdefault}{m}{sl}
\SetMathAlphabet{\mathsfit}{bold}{\encodingdefault}{\sfdefault}{bx}{n}

\newcommand{\R}{\mathbb{R}}